\DeclareMathOperator*{\argmin}{arg\,min}
\newcommand{\de}{\,\text{d}}
\newcommand{\R}{\mathbb{R}}
\newcommand{\Ex}{\mathbb{E}}
\newcommand{\wprox}{\text{WProx}}
\newcommand\keywordsname{Key words}
\newcommand\AMSname{AMS subject classifications}
\newtheorem{proposition}{Proposition}[section]
\newtheorem{remark}{Remark}[section]
\newcommand{\bGamma}{\boldsymbol{\Gamma}}
\def\blfootnote{\gdef\@thefnmark{}\@footnotetext}
\newenvironment{@abssec}[1]{%
     \if@twocolumn
       \section*{#1}%
     \else
       \vspace{.05in}\footnotesize
       \parindent .0in
         {\upshape\bfseries #1. }\ignorespaces
     \fi}
     {\if@twocolumn\else\par\vspace{.1in}\fi}
\newenvironment{keywords}{\begin{@abssec}{\keywordsname}}{\end{@abssec}}
\title{{\textbf{ Wasserstein proximal operators describe score-based generative models and resolve memorization } }}
\author[1]{Benjamin J. Zhang}
\author[2]{Siting Liu}
\author[3]{Wuchen Li}
\author[1]{Markos A. Katsoulakis}
\author[2]{Stanley J. Osher}
\affil[1]{Department of Mathematics and Statistics, University of Massachusetts Amherst}
\affil[2]{Department of Mathematics, University of California, Los Angeles}
\affil[3]{Department of Mathematics, University of South Carolina, Columbia}
\begin{document}
\maketitle

\begin{abstract}
We focus on the fundamental mathematical structure of score-based generative models (SGMs). We first formulate SGMs in terms of the Wasserstein proximal operator (WPO) and demonstrate that, via mean-field games (MFGs), the WPO formulation reveals mathematical structure that describes the inductive bias of diffusion and score-based models.  In particular, MFGs yield optimality conditions in the form of a pair of coupled partial differential equations: a forward-controlled Fokker-Planck (FP) equation, and a backward Hamilton-Jacobi-Bellman (HJB) equation. Via a Cole-Hopf transformation and taking advantage of the fact that the cross-entropy can be related to a linear functional of the density, we show that the HJB equation is an uncontrolled FP equation. Second, with the mathematical structure at hand, we present an \emph{interpretable kernel-based} model for the score function which dramatically improves the performance of SGMs in terms of training samples and training time. In addition, the WPO-informed kernel model is explicitly constructed to avoid the recently studied memorization effects of score-based generative models. The mathematical form of the new kernel-based models in combination with the use of the terminal condition of the MFG reveals new explanations for the manifold learning and generalization properties of SGMs, and provides a resolution to their memorization effects. Finally, our mathematically informed, interpretable kernel-based model suggests new scalable bespoke neural network architectures for high-dimensional applications.  
\end{abstract}

\begin{keywords}
Score-based generative models; Wasserstein proximal operators; {Memorization}; Manifold learning; Kernel methods; Cole–Hopf transformation; Reverse heat equation 
\end{keywords}
\blfootnote{
M. Katsoulakis and B. Zhang are partially funded by AFOSR grant FA9550-21-1-0354. M.K. is partially funded by  NSF DMS-2307115 and  NSF TRIPODS CISE-1934846. 
S. Liu and S. Osher are partially funded by AFOSR MURI FA9550-18-502 and ONR N00014-20-1-2787. W. Li is supported by AFOSR MURI FP 9550-18-1-502, AFOSR YIP award No. FA9550-23-1-0087, NSF DMS-2245097, and NSF RTG: 2038080.}

\normalsize
\section{Introduction}

Proximal operators are powerful tools in optimization \cite{parikh2014proximal}. \emph{Wasserstein} proximal operators (WPO) have been found to be crucial tools for formulating flow-based generative models through connections made by  Hamilton-Jacobi equations \cite{osher2023hamilton} and mean-field games \cite{zhang2023mean,li2023kernel}. This paper formulates score-based generative models (SGMs) \cite{song2020score} and related denoising diffusion models \cite{ho2020denoising} with the Wasserstein proximal operator. We show that SGMs can be concisely described in terms of the Wasserstein proximal operator of \emph{cross-entropy}, which places the method in the context of other generative flows and provides new pathways for analysis of SGMs. For example, the manifold learning aspects of proximal operators on 1-Wasserstein space have been noted in \cite{birrell2022f, gu2023lipschitzregularized}; the Wasserstein proximal operator therefore provides an additional justification for the manifold learning properties of SGMs \cite{pidstrigach2022score}.   

The empirical success of SGMs has induced much research into their statistical properties and mathematical foundations \cite{pidstrigach2022score,de2022convergence,kwon2022score}. There is much interest in how to construct better models for SGMs that train faster with less data. {Some examples include designing critically damped diffusions that converge to Gaussians faster \cite{dockhorn2021score}, finding the optimal time horizon \cite{franzese2023much}, implementing high order SDE integrators \cite{zhang2022fast}, or distilling diffusion models into consistency models \cite{song2023consistency}.} {More interestingly, there is a growing body of work documenting so-called \emph{memorization effects} of diffusion models \cite{li2024good,somepalli2023diffusion,somepalli2023understanding,gu2023memorization,carlini2023extracting}. In particular, \cite{li2024good} presents an example where learning a score function well with the denoising score-matching objective produces a generative model that is equivalent to a kernel density estimate. Moreover, they demonstrated that the primary generative capabilities of the resulting model were due to early stopping, which produced samples equivalent to noisy replicas of the training data. Furthermore, this memorization effect has even been noted in major newspapers \cite{nytimes2024}, in which they demonstrated that a popular score-based text-to-image generator, Midjourney, may output mildly modified copyrighted images that were likely part of its training data. Copyright infringement is a broader, systemic challenge for generative modeling that may entangle engineers and practitioners in litigation \cite{nytimes2023}. }

The relationship between Wasserstein proximal operators and generative modeling has previously been explored in gradient flows \cite{salim2020wasserstein} and in generative adversarial networks \cite{lin2021wasserstein}. We present a fundamental characterization of SGMs in terms of the Wasserstein proximal operator which yields an \emph{interpretable} kernel-based approach to approximating score functions. In particular, via kernel formulas for computing regularized Wasserstein proximal operators \cite{li2023kernel} and their relation to Hamilton-Jacobi-Bellman (HJB) equations, we construct a \emph{WPO-informed} kernel model for the score function using the Green's functions related to the HJB equation. Moreover, we demonstrate that enforcing the terminal condition of the associated HJB equation to learn the parameters of the kernel-based formula is tantamount to learning the manifold on which the data lies. {The combination of the mathematical form of the WPO-kernel model and its training procedure provides a resolution to the memorization effects of SGMs.} Empirical evidence demonstrate that the mathematical structure encoded by WPO-informed kernel model exhibits faster learning of the score function with less data {and produces a SGM that generalizes}.  We discuss our contributions in the following two sections.

\paragraph{Contribution 1: Formulating score-based generative models through Wasserstein proximal operators.}

In Section~\ref{sec:wposgm}, we provide a fundamental characterization of score-based generative model as Wasserstein proximal operators of cross-entropy. The mean-field games (MFG) formulation of SGMs and regularized WPOs established in \cite{zhang2023mean} and \cite{li2023kernel}, respectively, establish the connection between SGMs and WPOs. The optimality conditions of the SGM MFG yield a pair of PDEs: a \emph{forward} controlled Fokker-Planck (FP) equation, and a \emph{backward} Hamilton-Jacobi-Bellman (HJB) equation. Moreover it was shown in \cite{zhang2023mean} that the implicit score-matching objective is a direct consequence of the MFG formulation, and shows that the ISM objective is related to simultaneously minimizing the Wasserstein metric and the cross-entropy loss.

Our primary contribution in this section is connecting the Wasserstein proximal operator of linear energies \cite{li2023kernel} to the MFG perspective of SGMs \cite{zhang2023mean}. Kernel formulas for approximating the Wasserstein proximal operator in \cite{li2023kernel}, and this connection allows for the construction of kernel-based models for the score function. We will review how the particular form of the MFG, specifically the inclusion of the cross-entropy as the terminal cost decouples the set of PDEs, implying that the HJB equation alone characterizes SGM. Moreover, by a Cole-Hopf transformation, the HJB equation is equivalent to an uncontrolled FP equation \cite{evans2022partial,zhang2023mean}. The MFG's \emph{forward-backward} structure explains the noising-denoising nature of score-based generative models. Establishing the connection between WPOs and SGMs justifies the use of kernel formulas from \cite{li2023kernel} to provide a closed form solution to the score function, which is the topic of the next section. 

We also highlight an observation that the backward-forward PDE system which arises from the SGM MFG provides an alternative perspective for reversing the heat equation. Solving the reverse heat equation is an ill-posed problem in numerical PDEs. By applying PDE theory (mean-field games) to the SGM system, we discuss how the one-way coupled PDE system corresponding to SGMs yields a well-posed system with an associated optimization problem that has the effect of reversing the heat equation, provided that some minimal information about the initial condition is available.

\paragraph{Contribution 2: Deconstructing score-based models with an interpretable WPO-informed kernel model that \emph{resolves memorization and generalizes}.}

With the WPO formulation of SGM at hand, our main contribution in Section \ref{sec:wpokernelmodel} is the WPO-informed kernel model which is an \emph{interpretable} model that unravels the mathematical structure of the score function. The WPO formulation of SGM provides a description of the \emph{inductive bias} of SGM, that is, the inherent mathematical structure of the score function instantiated as the gradient of the solution of the HJB equation \eqref{eq:hjb}. The solution of the HJB equation can be written in closed form as the Cole-Hopf transform of a kernel model. Incorporating the structure of the HJB equation allows the WPO-informed kernel score model to be trained quickly with less data. In Section \ref{sec:numerical}, we demonstrate the advantages of the kernel formula in illustrative numerical examples. The kernel-based model in \eqref{eq:kernelcovdensity} is the main result of the WPO formulation of SGMs, which we preview here
\begin{align*}
    \hat{\pi}_\theta(x; \{Z_i\}_{i = 1}^N) = \frac{1}{N} \sum_{i = 1}^N \frac{\det \bGamma_\theta(Z_i)}{(2\pi)^{d/2}}\exp\left( -\frac{(x-Z_i)^\top\bGamma_\theta(Z_i) (x-Z_i)}{2} \right).
\end{align*}
The kernel in the WPO-informed model is informed by the \emph{Green's function} of the HJB equation, which comes from the optimality conditions of the MFG. The WPO-informed kernel model is a linear combination of Green's function centered on the training data, which solves the HJB equation exactly by construction. Given training data $\{Z_i\}_{i = 1}^{N_\text{train}}$ from data distribution $\pi$ we learn the precision $\bGamma_\theta(Z_i)$ (inverse covariance matrices) around each of the kernel centers, which are a subset of the training data\footnote{In numerical experiments, we chose a subset of training data to be the kernel centers, but in general this is not necessary.}. It is simply an extension of the kernel formulas for approximating the Wasserstein proximal operators of linear operators first derived in \cite{li2023kernel}. Ostensibly this model is simply a Gaussian mixture model. {The view that SGMs are GMMs or kernel formulas has been noted recently in \cite{li2024good}, which further supports the use of the WPO-kernel model.} The WPO formulation of SGM \eqref{eq:score_opt} and its solution described by Proposition \ref{prop:repformula} provides a kernel representation formula that encodes the inductive bias of SGMs. A Gaussian mixture model is a natural choice for expressing the inductive bias as approximating the kernel representation formula \eqref{eq:repformula} admits a kernel score model that can be computed in closed form. This fact is presented in Proposition \ref{prop:hjbsolution}.

A key difference from a standard GMM is that the model parameters $\theta$ are trained via implicit score-matching only at the terminal condition. As the kernel-based formula solves the HJB equation exactly except at the terminal condition, there is no need to perform score-matching for any other time other than the terminal time (see Proposition \ref{prop:hjbsolution}).
The local precision matrices are learned by minimizing the implicit score-matching objective, which has the effective of learning the underlying Riemannian manifold on which the data distribution lies \cite{roweis2000nonlinear}. {This allows the resulting generative model to generalize and avoid memorization effects}.

The resulting model provides an explainable, interpretable formulation of score-based generative model grounded through interconnections among MFGs, information theory, optimal transport, manifold learning, and optimization. Our model clarifies the effectiveness of \emph{early stopping} and manifold learning properties of SGMs. Moreover, the manifold learning properties may mathematically explain the benefits of latent diffusion methods \cite{vahdat2021latentSGM,rombach2021latentdiffusion}, where manifold learning is treated separately from the SGM.
\begin{figure}[h]
    \centering
    \includegraphics[width = 0.6\textwidth, trim={0 200 0 0},clip]{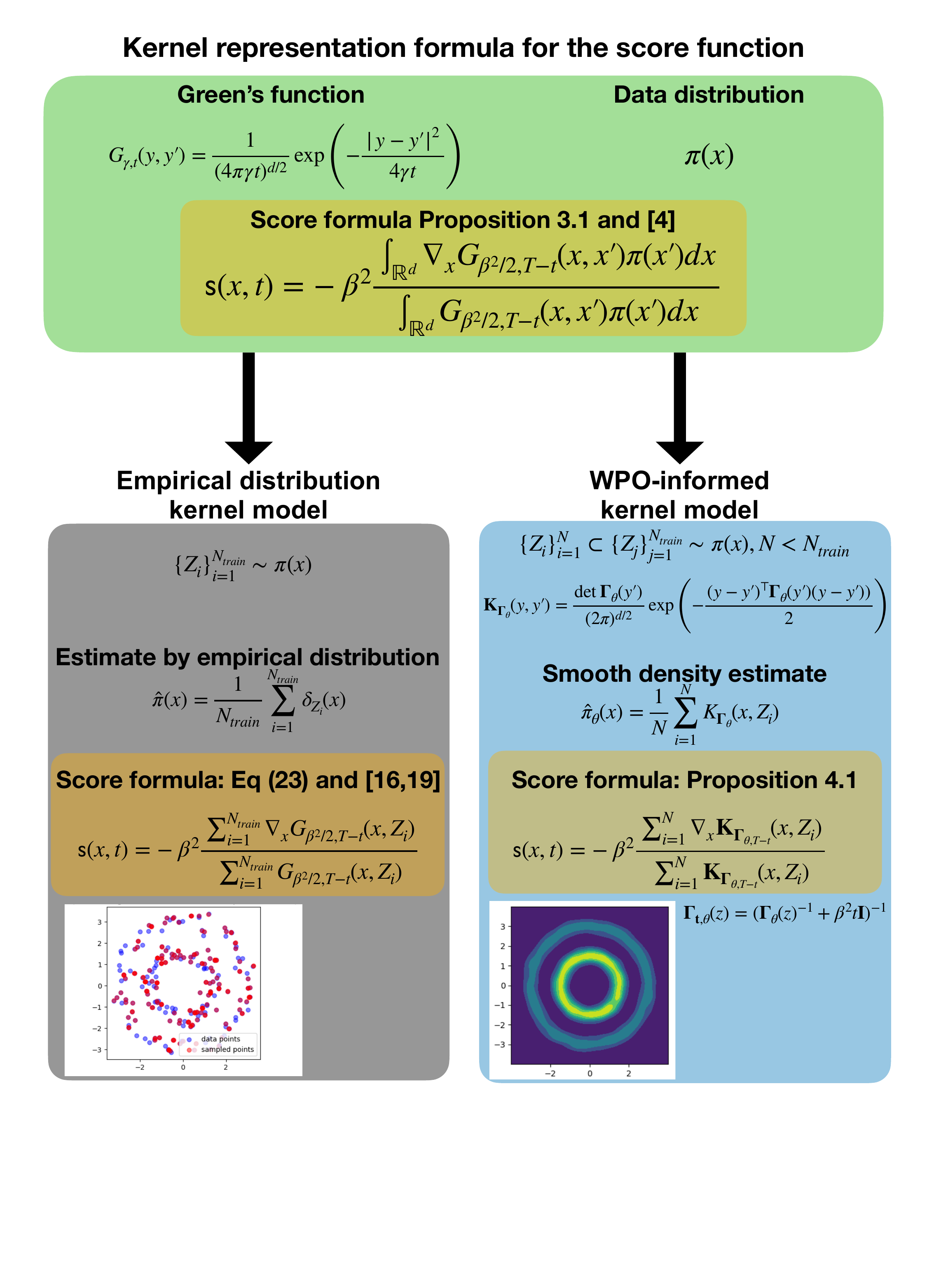}
    \caption{ The core idea of this paper is that since the score function has a kernel representation formula, approximations to the score should respect the structure of the formula. Use of the empirical distribution to construct a kernel formula for the score function memorizes the training data. Our WPO-informed kernel model learns local precision matrices via the terminal condition of a HJB equation, which produces a kernel-based model that generalizes better and exhibits manifold learning. }
    \label{fig:wposgm} 
\end{figure}
\section{Background on score-based generative models and Wasserstein proximal operators}
\label{sec:background}
We review score-based generative models with SDEs and kernel approximation to Wasserstein proximal operators of linear energies as presented in \cite{song2020score} and \cite{li2023kernel}, respectively. In Section \ref{sec:wposgm} we show how the two seemingly disparate topics are mathematically equivalent.

\paragraph{Notation.} Let $\mathcal{P}_2(\R^d)$ denote the space of probability distributions with finite second moments. The pair $(\mathcal{P}_2(\R^d),\mathcal{W}_2)$ is a complete separable metric space on $\mathcal{P}_2(\R^d)$, where $\mathcal{W}_2(\rho,\rho')$ is the $2$-Wasserstein distance \cite{santambrogio2015optimal}. The density functions of probability distributions $\rho\in \mathcal{P}_2(\R^d)$ are denoted $\rho(x)$. 

\subsection{Score-based generative models} \label{subsec:sgm}
Suppose we wish to produce samples from target data distribution $\pi \in \mathcal{P}_2(\R^d)$ given only a finite number of samples $\{X_i\}_{i = 1}^N\sim \pi$. Let $f:\R^d\times [0,T]\to \R^d$ be a vector field for some scalar $T>0$, and let $\beta(t)$ be a positive function for $t\in [0,T]$. Let $Y(s)$ be the stochastic process evolving according to SDE
\begin{align}\label{eq:noising}
    \de Y(s) &= -f(Y(s),T-s) \de s + \beta(T-s) \de W(s) \\
    Y(0) &\sim \pi \nonumber
\end{align}
where $Y(s) \sim \eta(\cdot,s)$ and $W(s)$ is a standard Brownian motion. This process adds noise to samples from $\pi$; typically $f$ and $\beta$ are chosen so that $\pi$ is approximately normal for sufficiently large $T$. Score-based generative models aim to learn a controlled SDE that reverses the evolution of the noising process. In \cite{anderson1982reverse} it was first found that the process $X(t)\sim\rho(\cdot,t)$ 
\begin{align}\label{eq:denoising}
    \de X(t) & = \left[ f(X(t),t) + \beta(t)^2 \nabla \log \eta(X(t),T-t)\right] \de t + \beta(t) \de W(t) \\
    X(0) &\sim \eta(\cdot,T) \nonumber
\end{align}
reverses the evolution of $Y(s)$ with $\rho(\cdot,t) = \eta(\cdot,T-t)$ so that $\rho(\cdot,0) = \eta(\cdot,T)$ and $\rho(\cdot,T) = \pi(\cdot)$. Assuming the score function $\mathsf{s}(y,s) = \nabla \log \eta(y,s)$ is learned, then new samples from $\pi$ can be obtained by evolving samples from $\eta(\cdot,T)$ (which are approximately normal) through the SDE of $X(t)$. To learn the score function, \cite{song2020score} trains a neural net $\mathsf{s}_\theta:\R^d \times[0,T] \to \R^d$ by minimizing a score-matching loss function. The explicit score-matching objective (ESM) 
\begin{align}\label{eq:esm}
        C_{ESM}(\theta) =\int_0^T  \beta(T-s)^2\Ex_{\eta(\cdot,s)}\left[  \| \mathsf{s}_\theta(Y(s),s) - \nabla \log \eta(Y(s),s) \|^2 \right] \de s 
\end{align}
is generally intractable to compute since evaluations of the density function are not available \cite{song2020sliced}. One of the practical alternatives is the implicit score-matching objective (ISM) \cite{song2020sliced}
\begin{align}\label{eq:ism}
     C_{ISM}(\theta) =  \int_0^T  \beta(T-s)^2\Ex_{\eta(\cdot,s)}\left[  \|\mathsf{s}_\theta(y,s) \|^2 + 2\nabla \cdot \mathsf{s}_\theta(y,s) \right] \de s
\end{align}
which applies integration-by-parts to avoid direct evaluations of the score function. Here $\nabla \cdot$ denotes the divergence operator. Another loss function is the denoising score-matching (DSM) objective \cite{song2020score,ho2020denoising,vincent2011connection}
\begin{align}\label{eq:dsm}
     C_{DSM}(\theta) = \int_0^T  \beta(T-s)^2 \Ex_{Y_0 \sim \pi} \Ex_{Y(s) \sim \eta(\cdot,s|Y_0)}\left[\|\mathsf{s}_\theta(Y(s),s) - \nabla \log \eta(\cdot,s|Y_0) \|^2 \right]\de s,
\end{align}
 which requires knowledge of the conditional score function $\nabla \log \eta(y,s|Y_0)$. The functions $f$ and $\beta$ are typically chosen so that the conditional score function is known in closed form. The DSM objective is most frequently used in practical applications as it does not require gradient evaluations when the noising process is a linear SDE. While these three objective functions are equivalent in the limit of infinite data, i.e., they share the same minimizers, the properties of their empirical losses are not fully understood.

 \subsection{Kernel formulas for regularized Wasserstein proximal operators} %

 Proximal operators often arise in the optimization of nonconvex and nonsmooth functions \cite{parikh2014proximal}. The Wasserstein proximal operator (WPO) is the instantiation of the proximal operator on the space of probability distributions with finite second moments $\mathcal{P}_2(\R^d)$. The WPO is a mapping on the Wasserstein space which is a metric space on $\mathcal{P}_2(\R^d)$ along with the $2$-Wasserstein distance \cite{santambrogio2015optimal}. Intuitively, given some functional on Wasserstein space (e.g., a probability divergence with respect to some fixed distribution), one evaluation of the proximal operator can be interpreted as a single step of gradient descent for that functional on Wasserstein space.

Given a potential energy function $V:\R^d\to \R$, a linear functional on Wasserstein space is
\begin{align}
    \mathcal{V}(\rho) = \int V(x) \rho(x) \de x.
\end{align}
Often, the potential energy function is the negative log density of some target density, i.e., $V(x) = -\log \pi(x)$. The Wasserstein proximal operator of $\mathcal{V}$ is a mapping on $\mathcal{P}_2(\R^d)$
\begin{align}\label{eq:wpo}
    \rho_h = \wprox_{h\mathcal{V}}(\rho_0) \coloneqq  \argmin_{\rho \in \mathcal{P}_2(\R^d)} \mathcal{V}(\rho) + \frac{\mathcal{W}_2(\rho_0,\rho)^2}{2h},
\end{align}
where $h>0$ is a scalar constant. The WPO has a dynamic formulation via the Benamou-Brenier formula \cite{santambrogio2015optimal}: for any $T>0$,
\begin{align}
    \frac{\mathcal{W}_2^2(\rho_0,\mu)}{2h} = \inf_{v}\left\{ \int_0^h \int_{\R^d} \frac{1}{2}|v(x,t)|^2 \rho(x,t) \de x \de t : \partial_t \rho + \nabla \cdot(v\rho) = 0, \rho(x,0) = \rho_0(x), \rho(x,h) = \mu(x) \right\}.
\end{align}
The WPO is, therefore, associated with the potential mean-field game
\begin{align} \label{eq:wpomfg}
    &\min_{\rho,v} \left\{- \int_{\R^d} V(x) \rho(x,h) \de x + \int_0^T\int_{\R^d} \frac{1}{2}|v(x,t)|^2 \rho(x,t) \de x \de t \right\}\\
    \text{s.t. } & \frac{\partial \rho}{\partial t} + \nabla \cdot(v\rho) = 0, \, \rho(x,0) = \rho_0(x) \nonumber,
\end{align}
which has optimality conditions in the form of two coupled nonlinear partial differential equations (PDEs)
\begin{align}
    \begin{dcases}
        -\frac{\partial U}{\partial t} + \frac{1}{2}\left|\nabla U\right|^2 = 0,\, U(x,h) = V(x) \\
        \frac{\partial \rho}{\partial t} - \nabla \cdot(\rho \nabla U) = 0,\, \rho(x,0) = \rho_0(x).
    \end{dcases}
\end{align}
The first equation is a Hamilton-Jacobi equation, while the second is simply the continuity equation with $v = -\nabla U$. The output of the WPO operator applied to the input distribution $\rho_0$ is the density function $\rho(x,T)$. Solving such a system in general requires numerical approximation; however, standard strategies for solving PDEs are quite intractable in high dimensions. To address the curse of dimensionality, \cite{li2023kernel} proposed the \emph{regularized} Wasserstein proximal operator by introducing second derivative terms in the optimality conditions, which can be interpreted as a form of \emph{entropic} regularization, similar to the entropic formulation of optimal transport \cite{chen2016relation}. Doing so allows using kernel formulas which provides a closed form solution to the resulting mean-field game. 
The regularized Wasserstein proximal operator with entropic regularization parameter $\gamma>0$, $\rho_T(x) = {\wprox_{T\mathcal{V}}}_{,\gamma}(\rho_0)$, is associated with the PDE system 
\begin{align} \label{eq:rwpo_mfg}
    \begin{dcases}
        -\frac{\partial U}{\partial t} + \frac{1}{2}\left|\nabla U\right|^2 = \gamma\Delta U,\, U(x,T) =V(x) \\
        \frac{\partial \rho}{\partial t} - \nabla \cdot(\rho\nabla U) = \gamma \Delta \rho,\, \rho(x,0) = \rho_0(x),
    \end{dcases}
\end{align}
where $\rho_T = \rho(x,T)$. In \cite{li2023kernel}, it was shown that via a Cole-Hopf transform, the HJB equation in \eqref{eq:rwpo_mfg} is equivalent to the heat equation, which can be solved via Green's functions.  Recall that the \emph{heat kernel}, $G_{\gamma,t}(y,y')$, for diffusion constant $\gamma$ is the \emph{Green's function} of the heat equation and the convolution operator $*$ for a function $\phi:\R^d\to \R$ are defined 
\begin{align}
    G_{\gamma,t}(y,y') = \frac{1}{(4\pi\gamma t)^{d/2}} \exp\left(- \frac{|y-y'|^2}{4\gamma t} \right), \; (G_{\gamma,t}*\phi)(x) = \int_{\R^d} G_{\gamma,t}(x,x') \phi(x') \de x'.
\end{align}
The solution to the PDE system \eqref{eq:rwpo_mfg} can therefore be expressed as
\begin{align}\label{eq:wpokernelintegral}
U(x,t) = -2\gamma \log \left(G_{\gamma,T-t} * e^{-\frac{V(x)}{2\gamma}} \right).
\end{align}

\section{Deriving score-based generative models from Wasserstein proximal operators}
\label{sec:wposgm}

In this section we show how score-based generative models can be formulated in terms of a regularized Wasserstein proximal operator of cross-entropy. Minimizing the cross-entropy functional over spaces of probability distributions is a frequent task in machine learning \cite{cover1999elements,rubinstein2004cross}. We will show that score-based generative models produces samples of a single application of the Wasserstein proximal of the cross-entropy functional. Using results from \cite{zhang2023mean}, we derive canonical formulations score-based generative models as presented in \cite{song2020score}.

\subsection{Deriving SGM from regularized proximal operators of cross-entropy}\label{subsec:rwpo_to_sgm}

\label{subsec:regWPO}
Let $\pi \in \mathcal{P}_2(\R^d)$ be the target data distribution as defined in Section \ref{subsec:sgm} and let $\mu\in \mathcal{P}_2(\R^d)$ be some arbitrary distribution. The cross-entropy of a distribution $\pi$ with respect to $\mu$ is defined
\begin{align}
    \mathcal{H}(\mu,\pi) \coloneqq -\Ex_{\mu}\left[ \log \pi(x)\right] = -\int_{\R^d} \mu(x) \log \pi(x) \de x.
\end{align}
The cross-entropy method is related to the Kullback-Leibler divergence and the entropy $\mathcal{E}(\mu) = -\Ex_\mu[\log \mu(x)]$ by the formula $\mathcal{D}_{KL}(\mu\|\pi) = -\mathcal{E}(\mu) + \mathcal{H}(\mu,\pi)$. Minimizing the cross-entropy appears frequently in numerous machine learning tasks \cite{cover1999elements,rubinstein2004cross}; we will show that it also plays an interesting role in score-based generative models.

Starting from the regularized WPO mean-field game in \eqref{eq:rwpo_mfg} and results from \cite{zhang2023mean}, we show that SGMs can be fundamentally understood in terms of the cross-entropy loss and Wasserstein proximal operator. In particular, through a judicious choice of the regularization parameters $\gamma$ and $h$, we will show that \eqref{eq:rwpo_mfg} is equivalent to the SGM formulation in Section \ref{subsec:sgm}. First, consider the Wasserstein proximal operator in \eqref{eq:wpo} for $h = \beta^2T$ for $\beta,T>0$, and note its equivalence to 
\begin{align}\label{eq:wpo}
    \wprox_{h\mathcal{H}}(\rho_0) = \argmin_{\rho\in \mathcal{P}_2(\R^d)} \beta^2\mathcal{H}(\rho,\pi) + \frac{\mathcal{W}_2(\rho_0,\rho)^2}{2T}.
\end{align}
Next, by choosing the entropic regularization parameter $\gamma = \beta^2/2$, notice that we obtain the mean-field game optimality conditions for the regularized WPO, $\wprox_{\beta^2T\mathcal{H},\beta^2/2}(\rho_0)$,
\begin{align}\label{eq:score_opt}
\begin{dcases}
    -\frac{\partial U}{\partial t}+ \frac{1}{2}|\nabla U|^2 = \frac{\beta^2}{2}\Delta U,\quad U(x,T) = -\beta^2 \log \pi(x) \\
    \frac{\partial \rho}{\partial t}- \nabla \cdot(\rho \nabla U) = \frac{\beta^2}{2}\Delta \rho,\quad \rho(x,0) = \rho_0(x)    \end{dcases}
\end{align}
Combining the Cole-Hopf transformation with a time reparametrization, we apply the variable transformation $U(x,t) = -\beta^2 \log \eta(x,T-t)$ to the Hamilton-Jacobi equation and obtain the system 
\begin{align}
\begin{dcases}\label{eq:fpfp}
    \frac{\partial \eta}{\partial s} = \frac{\beta^2}{2}\Delta \eta,\, \eta(y,0) = \pi(y) \\
    \frac{\partial \rho}{\partial t}+ \nabla \cdot(\rho \beta^2  \nabla \log \eta) = \frac{\beta^2}{2}\Delta \rho,\quad \rho(x,0) = \rho_0(x) 
    \end{dcases}
\end{align}
The first equation is the heat equation, which in terms of SDEs, corresponds with an uncontrolled Brownian motion with diffusion coefficient $\beta$. The second equation is a controlled Fokker-Planck equation, where the controller is determined by the score function of the uncontrolled Brownian motion. Notice that these are precisely the noising \eqref{eq:noising} and denoising systems \eqref{eq:denoising} described in Section \ref{subsec:sgm} for $f \coloneqq 0$. Furthermore, if $\rho_0(x)$ is chosen to be the density $\eta(\cdot,T)$ at $s = T$, then (\cite{zhang2023mean}, Theorem 4.1) showed that the controlled Fokker-Planck will reverse the evolution of the heat equation. To solve the system of PDEs, one needs to find the solution to the HJB equation, whose gradient is precisely the score function. Therefore, we have shown that the pair of noising and denoising SDEs that appear in score-based generative models are naturally encoded in the backward-forward structure of the mean-field game representation of the regularized Wasserstein proximal operator.

Moreover, in (\cite{zhang2023mean}, Theorem 4.2), it was shown that the mean-field game in \eqref{eq:wpomfg} can be directly related to the implicit score matching objective in \eqref{eq:ism}, meaning that optimizing the ISM and learning the score is equivalent to solving the MFG \eqref{eq:score_opt}. With these connections, score-based generative models can be summarized with the forward and inverse $\wprox$ notation:
\begin{align}
\label{eq:proximal_back_fwd}
    \pi\approx \tilde{\pi} = \wprox_{\beta^2T \mathcal{H},\beta^2/2}( \wprox_{\beta^2T \mathcal{H},\beta^2/2}^{-1}(\hat{\pi})),
\end{align}
where $\hat{\pi}$ is the empirical distribution defined by samples $\{X_i\}_{i = 1}^N \sim \pi$, {and $\tilde{\pi}$ is the generated distribution with an approximate score function.} Here, the inverse $\wprox$ should be understood as a mapping, i.e., the set of densities $\rho$ such that $\wprox_{\beta^2T\mathcal{H},\beta^2/2}(\rho) = \hat{\pi}$. 

The connection between Wasserstein proximal operator and score-based generative modeling allows the use of kernel formulas to represent the score function. In particular, by appealing to kernel representation formula of the solution to the HJB equation \eqref{eq:wpokernelintegral}, we can express the score function in terms of a similar kernel representation formula. 
\begin{proposition}{(Kernel representation formula for the score function).}\label{prop:repformula}
For initial condition $\eta(x,0) = \pi(x)$, the score function $\mathsf{s}(x,t) = \nabla \log \eta(x,t)$ in the denoising SDE \eqref{eq:denoising} has the kernel representation formula 
    \begin{align} 
        \mathsf{s}(x,t) &= -\nabla U(x,t) = \beta^2 \nabla \log \left((G_{\beta^2/2,{T-t}} * \pi)(x) \right) = \beta^2 \frac{(\nabla G_{\beta^2/2,T-t} * \pi)(x)}{(G_{\beta^2/2,T-t} * \pi)(x)}\\
       \mathsf{s}(x,t)  &= -\beta^2 \frac{\int_{\R^d} \frac{x-x'}{\beta^2(T-t)} G_{\beta^2/2,{T-t}}(x,x') \pi(x') \de x' }{\int_{\R^d} G_{\beta^2/2,T-t}(x,x') \pi(x') \de x'}.\label{eq:repformula}
    \end{align}
\end{proposition}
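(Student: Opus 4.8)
The plan is to reduce the statement to the heat-equation representation of $\eta$ that was already obtained in the passage leading to \eqref{eq:fpfp}, and then to differentiate under the integral sign. First I would recall that the change of variables $U(x,t) = -\beta^2\log\eta(x,T-t)$ turns the HJB equation in \eqref{eq:score_opt} into the \emph{forward} heat equation $\partial_s\eta = \tfrac{\beta^2}{2}\Delta\eta$ with $\eta(\cdot,0)=\pi$; in particular $\mathsf{s}(x,t) = -\nabla U(x,t) = \beta^2\nabla_x\log\eta(x,T-t)$, which is the first equality in the statement and the only place the Cole--Hopf transform is used.

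Second, I would solve this forward heat equation by its fundamental solution. With the normalization $G_{\gamma,\tau}(y,y') = (4\pi\gamma\tau)^{-d/2}\exp(-|y-y'|^2/(4\gamma\tau))$ fixed in the excerpt, the kernel $G_{\beta^2/2,\tau}$ solves $\partial_\tau G = \tfrac{\beta^2}{2}\Delta G$ and $G_{\beta^2/2,\tau}\to\delta$ as $\tau\downarrow 0$. Since the forward heat equation is well posed, its unique solution is $\eta(x,s) = (G_{\beta^2/2,s}*\pi)(x)$; this remains meaningful whether $\pi$ is a genuine density or the empirical measure $\hat\pi$, since convolution against the Gaussian regularizes either, provided only $s>0$, i.e. $t<T$. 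Setting $s=T-t$ gives $\eta(x,T-t) = (G_{\beta^2/2,T-t}*\pi)(x)$, hence $\mathsf{s}(x,t) = \beta^2\nabla\log\bigl((G_{\beta^2/2,T-t}*\pi)(x)\bigr)$, the second equality.

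Third, I would push the gradient inside the convolution: for $t<T$ the Gaussian and all its $x$-derivatives decay fast enough, uniformly on compacts in $x$, to differentiate under the integral, so $\nabla_x(G_{\beta^2/2,T-t}*\pi) = (\nabla_x G_{\beta^2/2,T-t})*\pi$, which gives the third equality $\mathsf{s}(x,t) = \beta^2(\nabla G_{\beta^2/2,T-t}*\pi)(x)/(G_{\beta^2/2,T-t}*\pi)(x)$. A direct computation from the Gaussian form yields $\nabla_x G_{\gamma,\tau}(x,x') = -\tfrac{x-x'}{2\gamma\tau}G_{\gamma,\tau}(x,x')$, and with $\gamma=\beta^2/2$ this is $-\tfrac{x-x'}{\beta^2(T-t)}G_{\beta^2/2,T-t}(x,x')$; substituting into the numerator produces exactly \eqref{eq:repformula}, sign included.

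I do not expect a serious obstacle: the content is essentially that Cole--Hopf trades the HJB equation for the (well-posed) forward heat equation, whose solution is an explicit Gaussian convolution. The only points needing a word of care are (i) matching the two diffusion normalizations — the $\tfrac{\beta^2}{2}\Delta$ in \eqref{eq:score_opt} versus the diffusion constant $\gamma=\beta^2/2$ in $G_{\gamma,\tau}$ — and the resulting $\beta^2(T-t)$ in the denominator of \eqref{eq:repformula}; and (ii) justifying that $\nabla$ commutes with the convolution, which is routine for $t<T$ since the heat kernel is smooth and rapidly decaying. It is worth remarking explicitly that the formula stays valid for $\pi=\hat\pi$, where it collapses to a finite Gaussian mixture, since this is precisely the starting point for the subsequent memorization discussion.
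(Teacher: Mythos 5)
Your proof is correct and follows essentially the same route as the paper's: Cole--Hopf reduces the HJB equation to the forward heat equation, the heat kernel gives the explicit solution, and taking the gradient of the logarithm yields \eqref{eq:repformula}. The only difference is that you rederive the Green's-function representation for $U$ (equivalently for $\eta$) explicitly, whereas the paper simply invokes \eqref{eq:wpokernelintegral} (citing Proposition 4 of the regularized-WPO reference); your added care about differentiating under the integral and matching the $\gamma=\beta^2/2$ normalization is harmless extra detail, not a different argument.
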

\begin{proof}
    The score function in \eqref{eq:denoising} is the gradient of the logarithm of the solution to \eqref{eq:fpfp}. The solution to \eqref{eq:fpfp} is related to the solution of \eqref{eq:score_opt} by a Cole-Hopf transform. The solution to \eqref{eq:score_opt} has a kernel representation formula as stated in Proposition 4 of \cite{li2023kernel} and \eqref{eq:wpokernelintegral}. Taking the gradient of the kernel \eqref{eq:wpokernelintegral} gives us the representation formula for the score function. 
\end{proof}

This kernel representation formula for the score function concisely describes the inductive bias of the score function. Proposition \ref{prop:repformula} implies that given parameters $t$ and $\beta$, the Green's function of the associated PDE determines the kernel of the representation formula. Therefore, the problem at hand is to find a suitable approximation of the  \emph{true, unknown} density $\pi(x)$ to approximate the score function. In Section \ref{sec:wpokernelmodel} we suggest two suitable approximations for the density function. {Another kernel approximation to the score function using interacting particle systems was studied the context of approximating solution to the Fokker-Planck equation of the Langevin dynamics \cite{maoutsa2020interacting}. }

\begin{remark}\textbf{(Entropic regularization parameter defines equivalence classes)} For a given $h>0$ in the Wasserstein proximal operator, there is a corresponding equivalence class of score-based models with diffusion coefficient $\beta$ and time horizon $T$ such that $\beta^2 T = h$. This perspective directly links the chosen parameters in SGMs to the step parameter in the Wasserstein proximal operator. The parameter $h = \beta^2T$ can therefore be interpreted as a diffusion scaling. 
\end{remark}

\begin{remark}\textbf{(Why is the Wasserstein proximal necessary?)}
    Recent state-of-the-art generative models are fundamentally described in terms of Wasserstein proximal formulations. For example, the optimal transport flow (OT-flow) \cite{onken2021ot} is the 2-Wasserstein proximal of the KL-divergence, and the $(f,\Gamma)$ generative adversarial network is the 1-Wasserstein proximal of $f$-divergences \cite{birrell2022f}. By placing score-based and denoising diffusion generative models in terms of the 2-Wasserstein proximal, we make connections to other generative models and the power of proximal Wasserstein operators more broadly. In particular, recently the {manifold learning} property of proximal operators have been noted on $1$-Wasserstein space \cite{birrell2022f,gu2023lipschitzregularized}. Placing SGMs in terms of the WPO may provide new explanations for its manifold learning properties \cite{pidstrigach2022score,de2022convergence}. 
\end{remark}

\begin{remark}\textbf{(SGMs reverse the heat equation with MFG theory and initial data)}
Given the heat equation $\partial_s \eta = \frac{\beta^2}{2} \Delta \eta$ and terminal condition $\eta(y,T)$, the task of solving for $\eta(y,s)$ for $s \in [0,T)$ is known as reversing the heat equation. Traditional discretization schemes are known to be numerically unstable as errors will accumulate and amplify \cite{leveque2007finite}. Ostensibly, score-based generative models appear to provide a numerically stable method for reversing the heat equation; we, however, emphasize there is a subtle but significant difference between solving the backward heat equation and SGM. In contrast to reversing the heat equation, SGM also has \emph{partial knowledge} of the initial distribution $\pi(y)$ in the form of samples. Having partial knowledge is empirically sufficient to make the problem better conditioned. This fact can be understood rigorously through MFG theory. The one-way coupled 
PDE systems \eqref{eq:score_opt} and \eqref{eq:fpfp} that arise as optimality conditions of the MFG formulation of SGMs (Theorem 7, \cite{zhang2023mean}) and the equivalence of score-matching to the MFG system (Theorem 10, \cite{zhang2023mean}) show that SGMs are well-posed. In particular, the Hamilton-Jacobi equation in \eqref{eq:score_opt} is a second order equation, and so standard regularity theory for viscous HJB equations imply that the PDE is well-posed \cite{evans2022partial}. Moreover, assuming regularity of the terminal condition, solutions to second order HJB equations are classical, as they are unable to form shocks (discontinuities). Therefore, the gradient of the HJB solution exists in a classical sense and the corresponding controlled Fokker-Planck that corresponds with the denoising SDE is well-posed too. MFG theory reveals that reversing the heat equation is a well-posed problem provided that the score function is known or can be learned. 
\end{remark}

\section{Deconstructing score-based generative models: a new WPO-informed kernel model}
\label{sec:wpokernelmodel}

We have shown that score-based generative models are regularized Wasserstein proximal operators of cross-entropy. In this section we revisit the equivalent mean-field games formulation of score-based generative models, and use it to construct a new WPO-informed kernel model for the score function. Mean-field games in general are quite difficult to solve because of their nonlinear, coupled nature. For the Wasserstein proximal of cross-entropy, they are intriguingly decoupled, and reduce to a control problem. In particular, the Hamilton-Jacobi-Bellman equation together with its terminal condition,
\begin{align}
\begin{dcases}\label{eq:hjb}
     -\frac{\partial U}{\partial t}+ \frac{1}{2}|\nabla U|^2 = \frac{\beta^2}{2}\Delta U \\
     U(x,T) = -\log \pi(x),
     \end{dcases}
\end{align}
is sufficient to characterize the solution of the mean-field game. Using the HJB equation, we will build a kernel-based model for the score function. 

\subsection{The empirical kernel model for the score function memorizes the training data}\label{sec:simplekernel}
The equivalence of the HJB equation in \eqref{eq:hjb} to a Fokker-Planck equation in \eqref{eq:fpfp} via a Cole-Hopf transformation was first used to construct kernel formulas for regularized Wasserstein operators of linear energies in \cite{li2023kernel}. From a probabilistic perspective, this is equivalent to the use of the probability transition kernel of the FP equation's corresponding stochastic process to compute the density function at future times. Given samples $\{Z_i\}_{i = 1}^{N_\text{train}}$ from distribution $\pi$, the empirical distribution 
\begin{align}
    \pi(\cdot) \approx \hat{\pi}(\cdot) = \frac{1}{N_\text{train}}\sum_{i = 1}^{N_\text{train}} \delta_{Z_i}(\cdot)  \label{eq:empiricaldistribution}
\end{align}
can serve as an approximation to the true distribution $\pi$. Appealing to the kernel representation formula for the solution to the HJB \eqref{eq:wpokernelintegral} is
\begin{align*}
\hat{U}(x,t) = -\beta^2 \log (G_{\beta^2/2,T-t} * \hat{\pi})(x) = -\beta^2 \log \left( \frac{1}{N_\text{train}} \sum_{i = 1}^{N_\text{train}} G_{\beta^2/2,T-t}(x,Z_i) \right).
\end{align*}

Note that this approximate solution only solves the HJB equation exactly for $t \in [0,T)$ as $\hat{U}(x,t)$ is \emph{ill-defined} at $t = T$ due to the delta functions, so the terminal condition is not satisfied in the classical sense. The corresponding score function is still well-defined for $t \in [0,T)$ (i.e., not including $t = T$). Therefore, applying Proposition \ref{prop:repformula}, we have the score formula
\begin{align}\label{eq:overfitsgm}
    \hat{\mathsf{s}}(x,t) = 
    {\color{black}-  \nabla \hat{U}(x, t)=}
    \frac{(\nabla G_{\beta^2/2,T-t} * \hat{\pi})(x)}{(G_{\beta^2/2,T-t} * \hat{\pi})(x)} = -\beta^2 \frac{\sum_{i = 1}^{N_\text{train}} \frac{x-Z_i}{\beta^2(T-t)} G_{\beta^2/2,T-t}(x,Z_i) }{\sum_{i =1 }^{N_\text{train}} G_{\beta^2/2,T-t}(x,Z_i)}.
\end{align}

Using this empirical kernel formula directly, however, produces a score-based generative model that fails to generalize. In fact, when used directly as part of the reverse SDE, this model will \emph{memorize} and \emph{resample} from the training data $\{Z_i\}$. {This behavior is the so-called memorization effect of score-based generative models and has been recently studied in both theoretical and applied contexts \cite{li2023kernel,somepalli2023diffusion,somepalli2023understanding,gu2023memorization}.} In \cite{pidstrigach2022score}, it was shown that as long as the initial condition for the controlled Fokker-Planck equation $\rho_0(x)$ shared the same support as $\eta(y,T)$, and the empirical kernel formula is used as an approximation to the score function, then $\rho(x,T)$ will share the same support as $\hat{\pi}$. Moreover, it was shown in \cite{vincent2011connection} that the denoising score-matching objective \eqref{eq:dsm} is equivalent to explicitly score-matching to the empirical kernel formula. In particular, \cite{pidstrigach2022score} noted that when neural nets are trained with the DSM objective \eqref{eq:dsm}, halting the training early is required to prevent the neural net from fitting exactly to the kernel formula. {A sample complexity argument demonstrating this phenomenon was presented in \cite{li2024good} in which they showed that when a neural net is trained with the DSM objective function, the model fits exactly to the empirical kernel approximation for the score function \eqref{eq:overfitsgm}. Appealing to the Wasserstein proximal notation in \eqref{eq:proximal_back_fwd}, using this empirical score score formula is equivalent applying the relation:
\begin{align}
     \hat{\pi} = \wprox_{\beta^2T \mathcal{H},\beta^2/2}( \wprox_{\beta^2T \mathcal{H},\beta^2/2}^{-1}(\hat{\pi})),
\end{align}} 
{that is, using \eqref{eq:overfitsgm} simply returns the training data. }

Moreover, in both the analysis and implementation of SGMs, early stopping of the denoising process, i.e. simulating \eqref{eq:denoising} up to $t = T-\epsilon$ for some $\epsilon >0$ is assumed to prove convergence of the method, or improved generalization \cite{chen2016relation,de2022convergence,conforti2023score,li2024good}. In continuous time, early stopping is equivalent to sampling from a mollified empirical distribution, i.e. $\eta(x,\epsilon) = (G_\epsilon * \hat{\pi})(x)$ or simply sampling from a Gaussian with variance $\epsilon^2 \mathbf{I}$ around each training point. Early stopping alone may not be enough for effective generalization as $\epsilon$ is not always chosen in an informed manner. One can imagine simple examples where the extent of early stopping is applied is dependent on the data. In Section \ref{sec:num:earlystop} we show numerical examples that show the impact of $\epsilon$ for generalization.

\subsection{A kernel model that generalizes: {resolving memorization}}

An accurate\footnote{Meaning that the score model is able to express the inductive bias of score-based generative models, as defined by the Hamilton-Jacobi-Bellman equation \eqref{eq:hjb}.} score model should solve the PDE \eqref{eq:hjb}, in particular, including the terminal condition. In spite of the memorization effects of the kernel formula, they will be essential for producing a kernel-based SGM that generalizes. Crucially, the Cole-Hopf transformation of the kernel formula is closed under the evolution of the HJB equation. The kernel model, however, fails to match the terminal condition, which is why the empirical kernel model memorizes to the training data. Our approach is to simply alter the kernel formula so that it will be able to better match the terminal condition in \eqref{eq:hjb} while still exactly solving the HJB equations. 

One approach to matching the terminal condition is to mollify the kernel formula at time $t = T$ so that $\hat{\pi}$ is a density function with support that matches the true support of $\pi$ rather than an empirical distribution. Convolving $\hat{\pi}$ with a Gaussian with a fixed bandwidth has the equivalent effect of early stopping with the empirical kernel score model \eqref{eq:overfitsgm}. We will consider a more flexible mollification where the regularizing Gaussian has a state-dependent covariance matrix that is learned from the training data.

\paragraph{A better kernel model.}
Consider the following kernel-based approach that produces a better approximation to $\pi$ in Proposition \ref{prop:repformula}. Like the empirical distribution, Gaussian mixture model approximations to $\pi$ are preferable in light of the representation formula \eqref{eq:repformula} as the integral can be evaluated in closed form. We introduce a matrix-valued function to model the local precision matrix around each of the kernel centers. The inclusion of this precision matrix, in effect, yields a Gaussian mixture model, which is still closed under the evolution of the heat equation. Define a matrix-valued function  $\bGamma_\theta: \R^d \to \R^{d\times d}$ to be a model for the precision matrix. Consider the model 
\begin{align}\label{eq:kernelcovdensity}
    \hat{\pi}_\theta(x; \{Z_i\}_{i = 1}^N) = \frac{1}{N} \sum_{i = 1}^N \frac{\det \bGamma_\theta(Z_i)}{(2\pi)^{d/2}}\exp\left( -\frac{(x-Z_i)^\top\bGamma_\theta(Z_i) (x-Z_i)}{2} \right), 
\end{align}
where $N<N_\text{train}$. This formula is simply a generalization of the empirical distribution by Gaussian kernels where a subset of the training data is used as the kernel centers, and a local covariance matrix is learned around each center. In contrast to the empirical kernel formula that consists of a mixture of Dirac masses, this model is a \emph{smooth} approximation which allows it to potentially generalize better. Moreover, as the HJB equation \eqref{eq:hjb} is the fundamental characterization of SGMs, we show in Proposition \ref{prop:hjbsolution} that the smooth WPO-informed kernel formula \eqref{eq:kernelcovdensity} is closed under the evolution of the HJB and, therefore, admits an analytical solution.

\begin{proposition}\label{prop:hjbsolution}
    The HJB equation \eqref{eq:hjb} with terminal condition $\hat{U}(x,T) = -\beta^2\log \hat{\pi}_\theta(x)$ with $\hat{\pi}_\theta(x)$ given by \eqref{eq:kernelcovdensity} is solved by
    \begin{align}\label{eq:hjbsolution}
        &\hat{U}(x,t) = -\beta^2\log\hat{\eta}_\theta(x,T-t) = - \beta^2 \log\left( \frac{1}{N} \sum_{i = 1}^N \frac{\det \bGamma_{T-t,\theta}(Z_i)}{(2\pi)^{d/2}}\exp\left( -\frac{(x-Z_i)^\top\bGamma_{T-t,\theta}(Z_i) (x-Z_i)}{2} \right) \right),\\
         & \bGamma_{t,\theta}(z) = (\bGamma_{\theta}(z) ^{-1} + \beta^2t\mathbf{I})^{-1}   . \nonumber 
    \end{align}
\end{proposition}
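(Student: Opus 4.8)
The plan is to reduce the nonlinear HJB equation \eqref{eq:hjb} to a linear heat equation via the Cole--Hopf transformation already used in Section~\ref{subsec:rwpo_to_sgm}, and then to exploit the fact that a Gaussian mixture is closed under convolution with a Gaussian. First I would set $\hat U(x,t) = -\beta^2\log\hat\eta_\theta(x,T-t)$ and verify --- by exactly the calculation that carries \eqref{eq:score_opt} into \eqref{eq:fpfp} --- that $\hat U$ solves \eqref{eq:hjb} on $[0,T)$ with terminal data $\hat U(x,T) = -\beta^2\log\hat\pi_\theta(x)$ precisely when $\hat\eta_\theta$ solves the forward heat equation $\partial_s\hat\eta_\theta = \tfrac{\beta^2}{2}\Delta\hat\eta_\theta$ with initial datum $\hat\eta_\theta(y,0) = \hat\pi_\theta(y)$. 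Equivalently, and with no appeal to Cole--Hopf, one may substitute the candidate \eqref{eq:hjbsolution} directly into \eqref{eq:hjb}: writing $\hat U = -\beta^2\log w$ with $w(x,t)=\hat\eta_\theta(x,T-t)$, a short computation shows the HJB residual equals $\tfrac{\beta^2}{w}\big(\partial_t w + \tfrac{\beta^2}{2}\Delta w\big)$, which vanishes exactly when $\hat\eta_\theta$ satisfies the heat equation, while the terminal condition holds by construction since $\hat\eta_\theta(\cdot,0)=\hat\pi_\theta$.

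It then remains to solve the heat equation with the Gaussian-mixture initial datum \eqref{eq:kernelcovdensity} in closed form. By the Green's-function representation recalled before \eqref{eq:wpokernelintegral}, $\hat\eta_\theta(x,s) = (G_{\beta^2/2,\,s}*\hat\pi_\theta)(x)$, and linearity of convolution distributes this over the $N$ mixture components. Each component is, up to its $x$-independent normalizing prefactor, the density of a Gaussian centered at $Z_i$ with covariance $\bGamma_\theta(Z_i)^{-1}$; the heat kernel $G_{\beta^2/2,\,s}$ is the density of a centered Gaussian with covariance $\beta^2 s\,\mathbf{I}$, since $\gamma=\beta^2/2$ makes $4\gamma s = 2\beta^2 s$. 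Convolving two Gaussian densities adds their covariances, so the $i$-th component of $\hat\eta_\theta(\cdot,s)$ is a Gaussian centered at $Z_i$ with covariance $\bGamma_\theta(Z_i)^{-1}+\beta^2 s\,\mathbf{I}$, hence with precision matrix $\big(\bGamma_\theta(Z_i)^{-1}+\beta^2 s\,\mathbf{I}\big)^{-1} = \bGamma_{s,\theta}(Z_i)$. Substituting $s = T-t$ yields \eqref{eq:hjbsolution}, and applying $-\nabla$ as in Proposition~\ref{prop:repformula} gives the associated closed-form score. Along the way I would record that $\bGamma_\theta(z)$ symmetric positive definite guarantees $\bGamma_{s,\theta}(z)$ is well-defined, symmetric, and positive definite for all $s\ge 0$, so every Gaussian appearing is genuine and the logarithm in \eqref{eq:hjbsolution} is well-defined.

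The one genuinely fiddly step --- and the place an error is most likely to creep in --- is tracking the normalizing constants through the Gaussian convolution, i.e. checking that the prefactor obtained by convolving $\det\bGamma_\theta(Z_i)/(2\pi)^{d/2}$ against the heat-kernel normalization collapses to the $\det\bGamma_{T-t,\theta}(Z_i)/(2\pi)^{d/2}$ stated in \eqref{eq:hjbsolution}. This rests on the determinant identity $\det\bGamma_{s,\theta}(Z_i) = 1/\det\big(\bGamma_\theta(Z_i)^{-1}+\beta^2 s\,\mathbf{I}\big)$ together with the standard normalization of a convolved Gaussian; it is worth fixing the $\det$-versus-$\sqrt{\det}$ convention in \eqref{eq:kernelcovdensity} so the bookkeeping is internally consistent. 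Everything else is the linear heat semigroup plus the Cole--Hopf change of variables, both of which are already in hand from \cite{li2023kernel} and Section~\ref{subsec:rwpo_to_sgm}.
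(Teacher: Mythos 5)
Your proof is correct, and it takes a genuinely different route from the paper's. The paper verifies the claim by brute-force substitution: it computes $\partial_t\hat U$, $\nabla\hat U$, and $\Delta\hat U$ for the candidate $\hat U = -\beta^2\log\hat\eta_\theta(\cdot,T-t)$ and then cancels terms against the HJB residual by hand. You instead run the Cole--Hopf change of variables in reverse, reduce the problem to the linear heat equation for $\hat\eta_\theta$, and invoke two standard facts: the heat semigroup acts by convolution with a centered Gaussian of covariance $\beta^2 s\,\mathbf{I}$, and convolution of Gaussians adds covariances. That derivation \emph{explains} why the precision flows as $\bGamma_{s,\theta}(z) = (\bGamma_\theta(z)^{-1}+\beta^2 s\,\mathbf{I})^{-1}$ rather than merely verifying it, reduces the bookkeeping to one determinant identity, and makes the closure of Gaussian mixtures under the heat flow the visible structural reason the proposition holds. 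The direct-verification route is more elementary in the sense of not appealing to the semigroup, but it is error-prone precisely where you flag a concern.

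Indeed, the normalization issue you raise is not merely fussy bookkeeping: with the prefactor $\det\bGamma_\theta(Z_i)/(2\pi)^{d/2}$ as literally written in \eqref{eq:kernelcovdensity}, the $i$-th mixture component is $\sqrt{\det\bGamma_\theta(Z_i)}$ times a normalized Gaussian, so after convolving with $G_{\beta^2/2,\,s}$ its prefactor becomes $\sqrt{\det\bGamma_\theta(Z_i)}\cdot\sqrt{\det\bGamma_{s,\theta}(Z_i)}/(2\pi)^{d/2}$, not $\det\bGamma_{s,\theta}(Z_i)/(2\pi)^{d/2}$. Equivalently, on the direct-verification side, $\partial_t\log\det\bGamma_{T-t,\theta} = \beta^2\,\mathrm{Tr}\,\bGamma_{T-t,\theta}$ while $\Delta$ of the exponential produces only a single $-\mathrm{Tr}\,\bGamma_{T-t,\theta}$, and these do not cancel in the HJB residual. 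The clean cancellation --- and the clean closure under the heat flow --- occurs with the standard $\sqrt{\det}$ normalization. So your parenthetical remark about ``fixing the $\det$-versus-$\sqrt{\det}$ convention'' should be upgraded from a caution to a correction: Proposition~\ref{prop:hjbsolution} holds as stated once $\det\bGamma$ in \eqref{eq:kernelcovdensity} and \eqref{eq:hjbsolution} is read as $\sqrt{\det\bGamma}$, and the paper's displayed trace terms in its own proof (which carry a spurious factor of $\beta^2$ in both $\partial_t\hat U$ and $\Delta\hat\eta_\theta$) should be corrected to match.
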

\begin{proof}
    It is easy to confirm that the terminal condition is satisfied exactly. Next, observe that
    \begin{align*}
        \partial_t& \hat{U}(x,t) = -\frac{\beta^2}{\hat{\eta}_\theta(x,T-t)} \frac{\partial}{\partial t} \hat{\eta}_\theta(x,T-t) = \frac{\beta^4}{\hat{\eta}_\theta(x,T-t)} \\ &  \times \sum_{i = 1}^N \frac{\det \bGamma_{T-t,\theta}(Z_i)}{N(2\pi)^{d/2}}\exp\left( - \frac{(x-Z_i)^\top\bGamma_{T-t,\theta}(Z_i) (x-Z_i)}{2} \right) \left( \frac{\|\bGamma_{T-t,\theta}(Z_i)(x-Z_i) \|^2 - \beta^2 \text{Tr}\bGamma_{\theta,T-t}}{2} \right),
    \end{align*}

    \begin{align*}
        \nabla \hat{U}(x,t) &= - \beta^2 \frac{\nabla {\hat{\eta}_\theta(x,T-t)}}{{\hat{\eta}_\theta(x,T-t)}}\\& = \frac{\beta^2}{\hat{\eta}_\theta(x,T-t)}\sum_{i = 1}^N \frac{\det \bGamma_{T-t,\theta}(Z_i)}{N(2\pi)^{d/2}}\exp\left( - \frac{(x-Z_i)^\top\bGamma_{T-t,\theta}(Z_i) (x-Z_i)}{2} \right)\left( \bGamma_{T-t,\theta}(Z_i)(x-Z_i)\right),
    \end{align*}

    \begin{align*}
        \frac{\beta^2}{2} \Delta  \hat{U}(x,t) = -\frac{\beta^4}{2}\left(\frac{\Delta \hat{\eta}_\theta(x,T-t) }{\hat{\eta}_\theta(x,T-t)} - \left\|\frac{\nabla \hat{\eta}_\theta(x,T-t)}{\hat{\eta}_\theta(x,T-t)} \right\|^2\right),
    \end{align*}

    where 
    \begin{align*}
        \Delta &\hat{\eta}_\theta(x,T-t) \\ &= \sum_{i = 1}^N \frac{\det \bGamma_{T-t,\theta}(Z_i)}{N(2\pi)^{d/2}}\exp\left(  - \frac{(x-Z_i)^\top\bGamma_{T-t,\theta}(Z_i) (x-Z_i)}{2} \right) \left( \|\bGamma_{T-t,\theta}(Z_i)(x-Z_i) \|^2 - \beta^2 \text{Tr}\bGamma_{\theta,T-t} \right).
    \end{align*}

    To differentiate the determinant of a symmetric matrix-valued function with respect to a parameter $t$, we used the fact that $\textbf{A}(t)$ for $t\in \R$, 
    $\partial_t \log \det \mathbf{A}(t) = \text{Tr}\left(\mathbf{A}(t)^{-1} \partial_t \mathbf{A}(t) \right)$ \cite{petersen2008matrix}. Matching terms, we see that $        \frac{\beta^2}{2} \Delta \hat{U}(x,t) = -\partial_t \hat{U}(x,t) + \frac{1}{2} \|\nabla \hat{U}(x,t)\|^2$.
\end{proof}

The WPO-informed kernel model presented in Proposition \ref{prop:hjbsolution} inherits the inductive bias of the score function as described by the kernel representation formula in Proposition \ref{prop:repformula}. See Figure \ref{fig:wposgm} for a graphical explanation of the relationship between the model in \eqref{eq:hjbsolution} and \eqref{eq:repformula}. The form of the WPO-informed kernel model \eqref{eq:kernelcovdensity} and the result of Proposition \ref{prop:hjbsolution} further supports the point that the model provides an \emph{explainable} characterization of SGMs. As noted in \cite{zhang2023mean} and \cite{lai2023fp}, SGMs inherently solve the HJB equation \eqref{eq:hjb}. Proposition \ref{prop:hjbsolution} shows that \emph{given} the terminal condition is a kernel formula (Gaussian mixture model), the score function is uniquely determined by the gradient of the solution \eqref{eq:hjbsolution}. Therefore, there is no need to perform score-matching for all $t \in [0,T]$, as it is done in all the score-matching objectives \eqref{eq:esm},\eqref{eq:ism},\eqref{eq:dsm} --- enforcing the terminal condition is sufficient to determine the score function $\hat{s}(x,t)$ for all $t\in [0,T]$. In the next section, we show that performing score-matching only for the terminal time $t = T$ in \eqref{eq:kernelcovdensity} is sufficient. Moreover, the implicit score-matching objective at the terminal time is used to train the local precision matrix model. The form of the model explicitly encodes the manifold learning properties of score-based generative models. In particular, we highlight Remark \ref{rmk:manifoldlearning} which provides a connection between the precision matrix in the WPO-informed model \eqref{eq:kernelcovdensity} and Riemannian manifolds.

\begin{remark}\textbf{(Learning the local precision matrices is manifold learning)} \label{rmk:manifoldlearning}
Incorporating a pre-processing step that learns the underlying data manifold, i.e. the latent space, has been empirically found to improve the generative qualities of SGMs \cite{vahdat2021latentSGM,rombach2021latentdiffusion}. Moreover \cite{pidstrigach2022score} observed and proved that SGMs inherently learn data manifolds. We explicitly exploit the results of \cite{pidstrigach2022score} by building a new kernel-based model \eqref{eq:scoremodel}  that learns information about the data manifold through local precision matrices. Manifolds embedded in Euclidean space are described by Riemannian metric tensors, which are a family of positive semidefinite symmetric matrices \cite{roweis2000nonlinear}. These metrics correspond to the learned precision matrices. The resulting density function is defined with respect to the Lebesgue measure of the data manifold.
\end{remark}

\paragraph{Enforcing the terminal condition with implicit score-matching efficiently learns the data manifold.}

Our kernel-based model is constructed to explicitly solve the HJB for $t\in[0,T)$; the remaining task is to learn the local precision matrices around the kernel centers so that the terminal condition is satisfied. To this end, we use the terminal condition to construct a loss function. Imposing some $L^2$ loss between the approximate $\hat{U}_\theta(x,T)$ and true $U(x,T)$ requires knowledge of the density and normalizing constant of $\pi$, which we do not have. Remarkably, however, the simplest way to enforce the terminal condition is to match the \emph{gradient} of the terminal condition, which is equivalent to implicit score-matching \eqref{eq:ism} only at the terminal time $t = T$. Score-matching is not needed for $t<T$ since the kernel solution to the HJB equation will optimize the ISM objective function exactly because of the mean-field game formulation. From Theorem 10 in \cite{zhang2023mean}, it is shown that ISM is equivalent to the SGM MFG optimality condition \eqref{eq:score_opt}. Moreover, Proposition \ref{prop:hjbsolution} shows that the kernel-based model solves the HJB exactly. 

Consider the optimization problem 
\begin{align}\label{eq:terminalcondition}
    \min_{\theta} \int_{\R^d} |\nabla U(x,T) - \nabla{\hat{U}}_\theta(x,T)|^2 \pi(x) \de x \implies \min_\theta  \int_{\R^d} \left(|\nabla \log \pi_\theta(x)|^2 + 2\Delta \log \pi_\theta(x) \right) \pi(x) \de x.
\end{align}
Solving this optimization problem has the effect of learning the local covariance matrix, which has the same effect as learning the manifold on which the data distribution is supported. The empirical kernel formula \eqref{eq:empiricaldistribution} is not well-defined as a function at the terminal time. Moreover, the derivative of the empirical kernel formula for $t = T$ is not defined, which implies that derivative cannot be taken and the integration by parts formula does not apply. There is, however, an implicit regularity assumption imposed on the target distribution when constructing SGMs since the integration by parts formula is used to derive the ISM \eqref{eq:ism}. The implicit regularity assumption is critical for the resulting generative model to generalize.

Let $y(\,\cdot\,;\theta):\R^d\to \R^N$ be a vector-valued function where the $i$-th component is 
\begin{align}
    {y}_i(x;\theta) = -(x-Z_i)^\top\bGamma_{\theta}(Z_i) (x-Z_i) +\log \det \bGamma_\theta(Z_i)- \frac{d}{2}\log 2\pi- \log N,
\end{align}
so that the density is $    \hat{\pi}_\theta(x) = \sum_{i = 1}^N \exp({y}_i(x;\theta))$, and the score function is
\begin{align}\label{eq:scoremodel}
    \hat{\mathsf{s}}_\theta(x)=\nabla \log \hat{\pi}_\theta(x) = \frac{\sum_{j = 1}^N \nabla {y}_j(x;\theta) \exp({y}_j(x;\theta))}{\sum_{j = 1}^N \exp({y}_j(x;\theta)) } = -\sum_{j = 1}^N \bGamma_\theta(Z_j)\left(x-Z_j \right) \sigma({y}(x;\theta))_j,
\end{align}
where $\sigma:\R^N \to \R^N$ is the softmax function, 
$    \sigma(y)_i = \frac{\exp(y_i)}{\sum_{j = 1}^N \exp(y_j)}.$ Matching the gradient of the terminal condition \eqref{eq:terminalcondition} and using the fact that $\Delta\log \pi_\theta= \pi_\theta^{-1}\Delta \pi_\theta-|\nabla\log\pi_\theta|^2$ yields the optimization problem
\begin{align}\label{eq:optimizationproblem}
&\min_\theta \int_{\R^d} \left( 2 \pi_\theta^{-1}\Delta \pi_\theta- |\nabla \log\pi_\theta(x)|^2 \right) \pi(x) \de x,
\end{align}
where
\begin{align}\label{eq:laplaciandensity}
    \pi_\theta(x)^{-1}\Delta \pi_\theta(x) = \sum_{i = 1}^N \left( |\nabla y_i(x;\theta)|^2 + \Delta y_i(x;\theta) \right) \sigma(y(x;\theta))_i, \\ \nabla y_i(x;\theta) = -\bGamma_\theta(Z_j)(x-Z_j),\,\,\,\, \Delta y_i(x;\theta) = -\text{Tr}(\bGamma_\theta(Z_j)).\nonumber
\end{align}

{A neural net is used to model $\bGamma_\theta(z)$ via the implicit score matching procedure at the terminal condition \eqref{eq:optimizationproblem}. This produces the WPO-informed kernel model for the score function \eqref{eq:kernelcovdensity} that, if used in conjunction with Proposition \ref{prop:hjbsolution} and the denoising SDE \eqref{eq:denoising}, will exactly produce samples from the kernel formula \eqref{eq:kernelcovdensity}. We note, however, in Remark \ref{remark:nosdes} that since \eqref{eq:kernelcovdensity} is a Gaussian mixture model, no simulation of SDEs is needed and samples can be produced directly from the WPO-informed kernel model. 
What allows the WPO-informed kernel formula to generalize and avoid memorization is its smoothness and regularity at the terminal condition. Moreover, we show that judicious use of a neural net can improve training --- rather than replacing the entire score function with a neural net, we preserve mathematical structure informed by the MFG PDEs associated with SGMs by embracing the kernel formula, and introduce a neural net to model the local precision matrices only. We demonstrate these claims via numerical examples in Section \ref{sec:numerical}. Moreover, in Section \ref{sec:bespoke} we show that PDE and kernel structure for the score function may yield informed neural net architecture for scalable implementations of the WPO-informed kernel model. }

\begin{remark}\textbf{(Implicit score-matching without autodifferentiation)}\label{remark:noautodiff}
In contrast to previous deep learning implementations of implicit score-matching \cite{song2020sliced}, which requires autodifferentiation packages and the Hutchinson estimator to compute the divergence of the score function, the WPO-kernel formula admits exact formulas for gradient, divergences, and Laplacians of the kernel formula and its Cole-Hopf transform. 

\begin{remark}\textbf{(No simulation of SDEs required)}\label{remark:nosdes}
The kernel-based model \eqref{eq:kernelcovdensity} obviates the need for simulating SDEs since the mixture model can be sampled directly. One, however, can still derive a score model for the denoising SDE with standard choices of the noising process (similar to Proposition \ref{prop:hjbsolution}). The noising process is typically chosen to be a linear SDE, so the transition kernel is Gaussian, and the score function can be found by convolving the transition kernel with $\hat{\pi}_\theta(x)$. 
\end{remark}

\end{remark}

\paragraph{Parametrizing the precision matrix.}
We outline one (but not the only) approach for parameterizing the precision matrix \cite{pinheiro1996unconstrained} which we apply in our numerical examples in Section \ref{sec:numerical}. We parametrize the precision matrix $\mathbf{\Gamma}_\theta \in \R^{d\times d}$ in terms of the Cholesky factors: $\mathbf{\Gamma}_\theta(x) = \mathbf{L}_\theta(x) \mathbf{L}_\theta(x)^\top$, where $\mathbf{L}_\theta$ is a lower triangular matrix. The entries of the Cholesky factor is populated by the outputs of a feedforward neural network $\psi_\theta: \R^d \to \R^{d(d+1)/2}$.

We summarize the algorithm in Algorithm \ref{alg:wpokernelscore}. The kernel model can be directly sampled just as a Gaussian mixture model would. This algorithm is an explainable reformulation of score-based generative modeling, derived by explicitly incorporating the inherent kernel structure. The model clarifies the manifold learning properties of SGM, the use of early stopping, and obviates the need for simulating the denoising SDE.

\begin{algorithm}
  \caption{Learning WPO-informed kernel models}
  \begin{algorithmic}
      \STATE \textbf{Input}: Samples $\{Z_i\}\sim \pi$, Cholesky factor entries neural network $\psi_\theta: \R^d \to \R^{d(d+1)/2}$
      \STATE Define precision matrix $\bGamma_\theta(x) = \mathbf{L}_\theta(x) \mathbf{L}_\theta(x)^\top$
        \STATE Set $\hat{\pi}_\theta \gets \eqref{eq:kernelcovdensity}$,$\nabla \log \hat{\pi}_\theta \gets \eqref{eq:scoremodel}$,  $ \hat{\pi}_\theta^{-1}\Delta \hat{\pi}_\theta \gets \eqref{eq:laplaciandensity}$ (\text{Requires no autodifferentiation})
      \STATE Find the optimal precision matrix
       $  \theta^* \gets \arg\min_\theta \frac{1}{N} \sum_{i = 1}^N 2\hat{\pi}_\theta(Z_i)^{-1}\Delta \hat{\pi}_\theta(Z_i)- |\nabla \log \hat{\pi}_\theta(Z_i)|^2 $\\
      \RETURN $\hat{\pi}_{\theta^*}(x)$
  \end{algorithmic}
  \label{alg:wpokernelscore}
\end{algorithm}

\begin{remark}{(Early stopping revisited)}
    Early stopping is a frequently used strategy to improve the sample quality of score-based generative models \cite{li2024good}. When simulating the denoising SDE, each trajectory is not simulated for the full time interval and is stopped early. This has the effect of smoothing the generative distribution and has been observed to aid in generalization. The precise amount of early stopping requires tuning. We may interpret our kernel-based formula as a generalization of early stopping, in which the local precision matrices are learned from data. It can also be interpreted as a way to choose the optimal early stopping parameter. Consider a simplified model where the local covariance matrix is identical over all space and is simply $h\mathbf{I}$. Then the optimal covariance solves \eqref{eq:optimizationproblem} where $\theta^* = h^*$ with the score model \eqref{eq:scoremodel} where $\bGamma = {h^*}^{-1}\mathbf{I}$. Sampling from this tuned distribution is the same as early stopping at $t = h$. 
\end{remark}

\section{Numerical examples}
\label{sec:numerical}

We conduct illustrative numerical experiments on synthetic datasets to demonstrate the effectiveness of our WPO-informed kernel model. We emphasize that the method here is not optimized to be scalable--- rather we implement the kernel model to demonstrate its manifold learning and generalization properties. In particular, we show that the WPO kernel model trains faster and intrinsically provides a density estimate. We also illustrate the explicit manifold learning properties of the kernel formula and demonstrate its superiority to early stopping. 

\subsection{WPO-informed kernel model trains faster and provides density estimation}
We implement the WPO-informed kernel model according to Algorithm \ref{alg:wpokernelscore}, where the function $\psi_\theta$ that models the entries of the Cholesky factor comprise is a feedforward neural network with five hidden layers of 64 nodes and a GeLU activation function. The terminal score-matching optimization problem is solved via stochastic gradient descent with batch size 64 and training dataset of size $N_\text{train} = 5\times 10^4$. The kernel centers $\{Z_i\}_{i = 1}^N$ where $N = 5000$ are chosen randomly from the training data. For comparison, we also train a standard SGM \eqref{eq:denoising} with the denoising score-matching objective \eqref{eq:dsm} to train the score neural network. The neural network for the score function $\mathsf{s}_\theta(x,t)$ has the same hidden architecture as $\psi_\theta$ and is trained with the same $5\times10^4$ training samples. 

In Figure \ref{fig:comparison_toydata}, we show the generated samples from the two models and compare it with samples from the true data distribution. Observe in Figure \ref{fig:wpo2dsamples} that the WPO-informed kernel model is able to nearly exactly reproduce the true distribution with just $5\times10^4$ steps of SGD. In contrast, in the Figure \ref{fig:dsm50000}, we show that the SGM learned via denoising score matching produces poor quality samples in $5\times 10^4$ steps, while Figure \ref{fig:dsm1000000} shows that it can reproduce the samples after $10^6$ training steps. Notice, however, that the two thin moon dataset is poorly approximated by denoising score matching SGM even after $10^6$ training steps. This is likely due to the fact that the neural net used to approximate the score function is unable to capture the irregular score function that arises from the thin moons. In contrast, the WPO-informed kernel model easily models the low-dimensional nature of the thin moons.

\begin{figure}

\begin{subfigure}{\textwidth} 
    \includegraphics[width=0.120\linewidth,trim=58 38 40 40, clip]{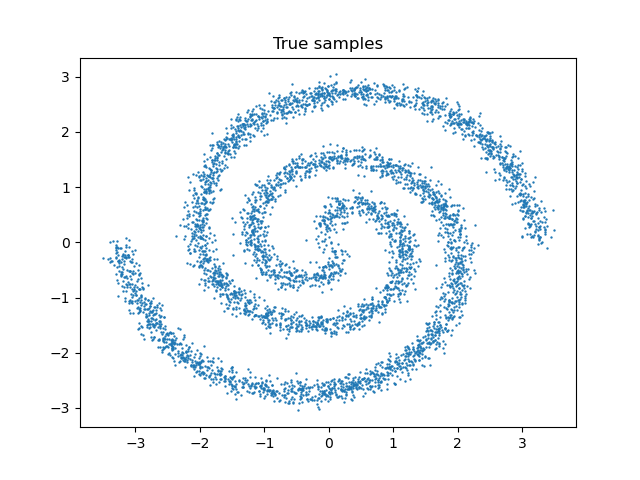}
    \includegraphics[width=0.120\linewidth,trim=58 38 40 40, clip]{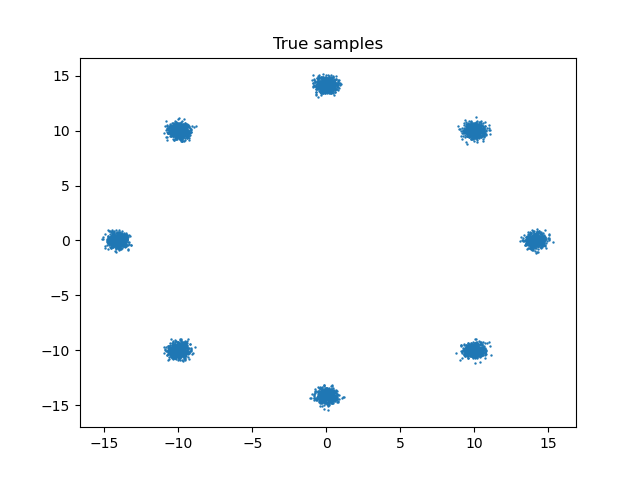}
    \includegraphics[width=0.120\linewidth,trim=58 38 40 40, clip]{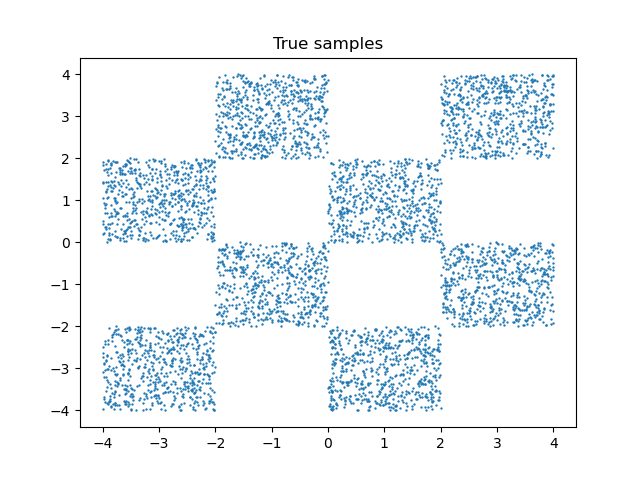}
    \includegraphics[width=0.120\linewidth,trim=58 38 40 40, clip]{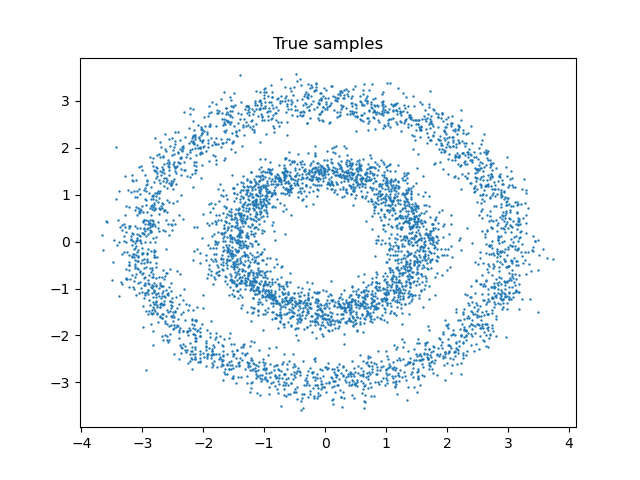}
    \includegraphics[width=0.120\linewidth,trim=58 38 40 40, clip]{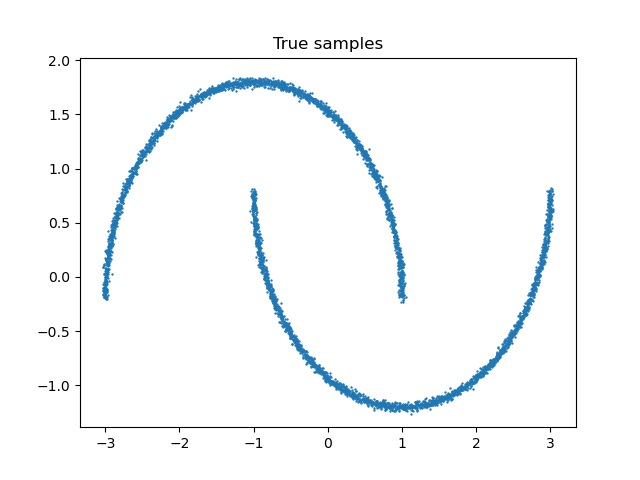}
    \includegraphics[width=0.120\linewidth,trim=58 38 40 40, clip]{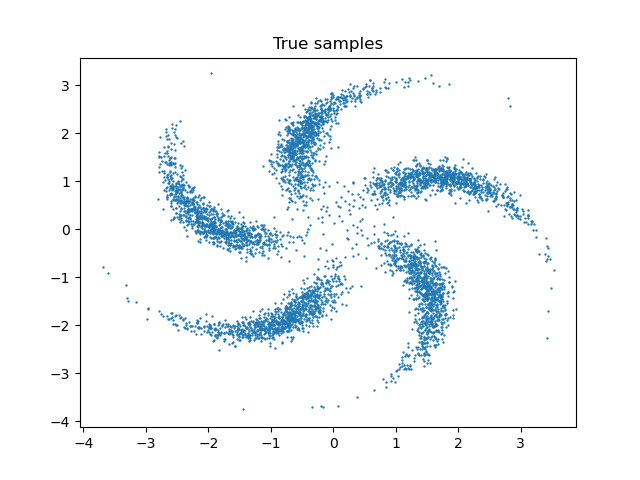}
    \includegraphics[width=0.120\linewidth,trim=58 38 40 40, clip]{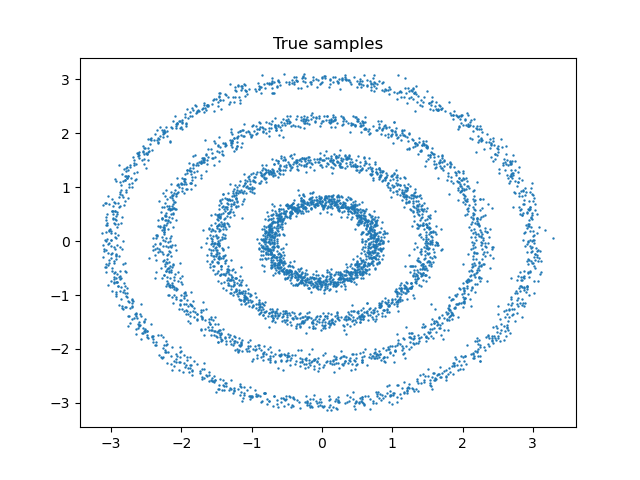}
    \includegraphics[width=0.120\linewidth,trim=58 38 40 40, clip]{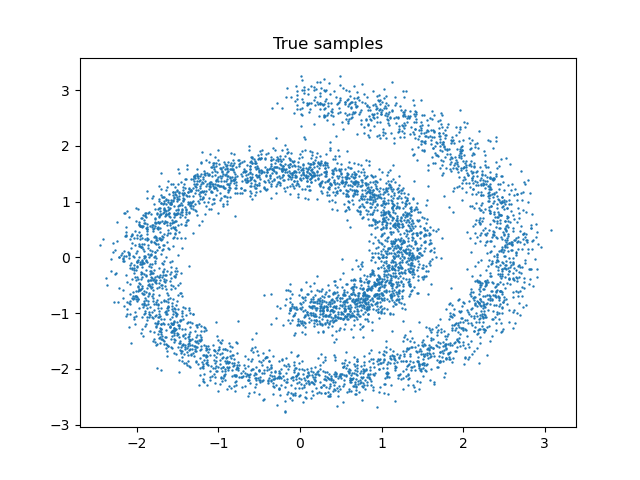}
\caption{True samples}\label{fig:truesamples}
  \end{subfigure}
  
\begin{subfigure}{\textwidth}
    \includegraphics[width=0.120\linewidth,trim=54 39 43 41, clip]{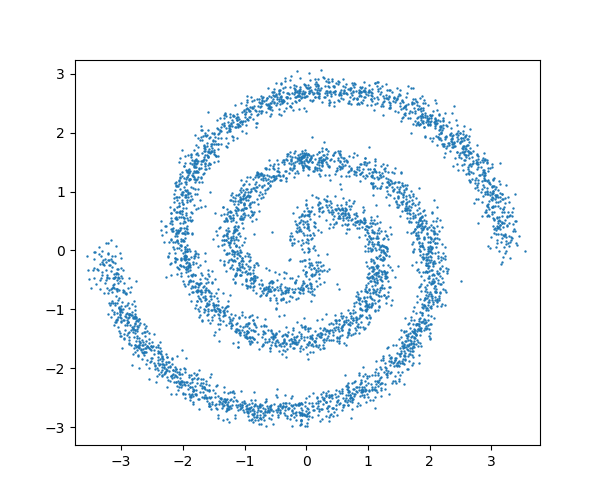}
    \includegraphics[width=0.120\linewidth,trim=54 39 43 41, clip]{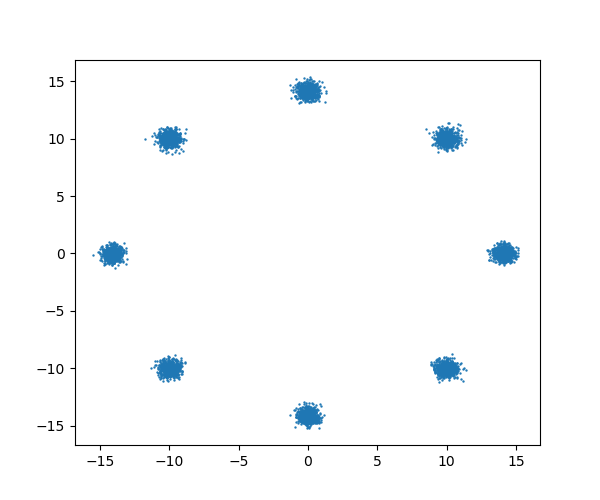}
    \includegraphics[width=0.120\linewidth,trim=54 39 43 41, clip]{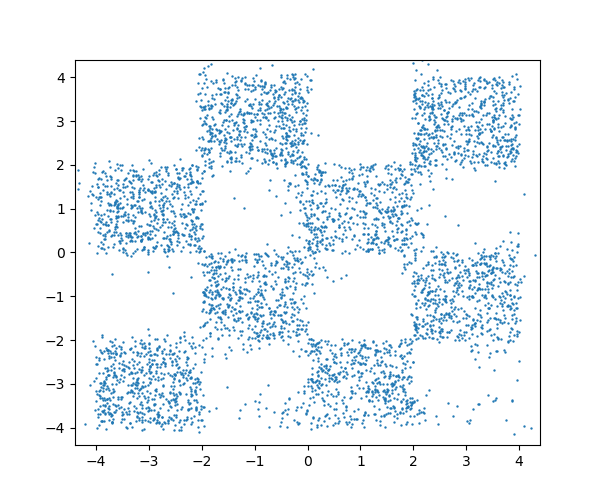}
    \includegraphics[width=0.120\linewidth,trim=54 39 43 41, clip]{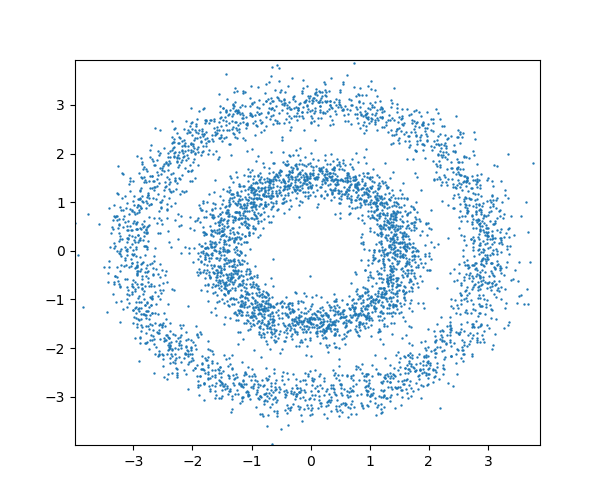}
    \includegraphics[width=0.120\linewidth,trim=54 39 43 41, clip]{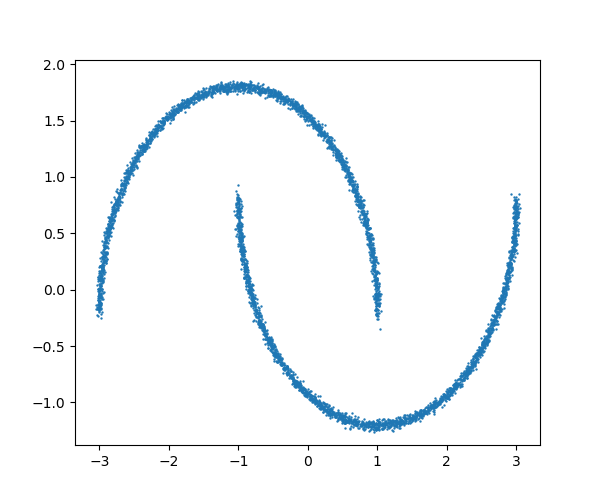}
    \includegraphics[width=0.120\linewidth,trim=54 39 43 41, clip]{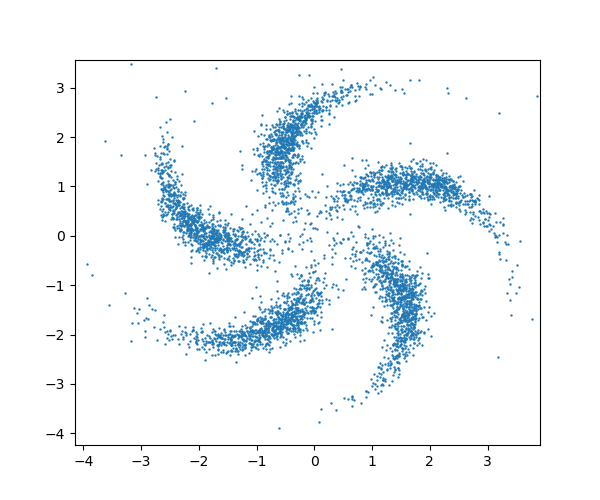}
    \includegraphics[width=0.120\linewidth,trim=54 39 43 41, clip]{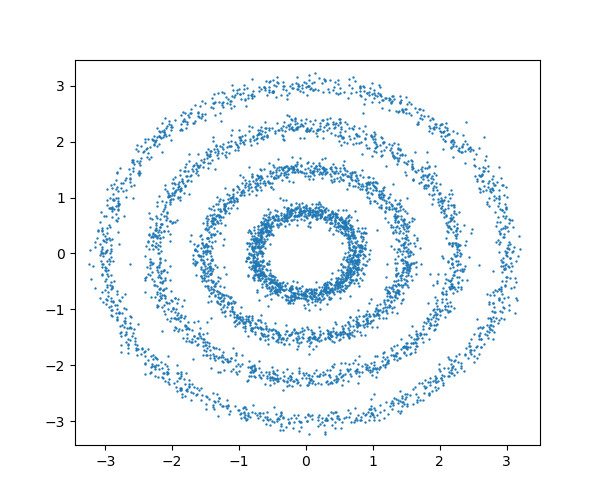}
    \includegraphics[width=0.120\linewidth,trim=54 39 43 41, clip]{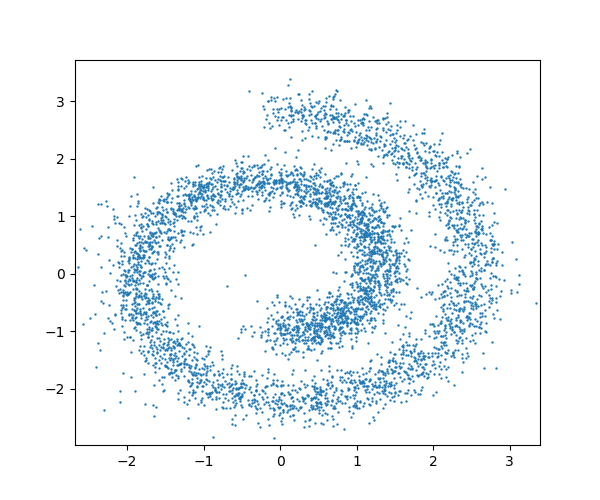}
    \caption{WPO-informed kernel model, trained with $5\times 10^4$ epochs}\label{fig:wpo2dsamples}
\end{subfigure}


\begin{subfigure}{\textwidth}
    \includegraphics[width=0.120\linewidth,trim=58 38 40 40, clip]{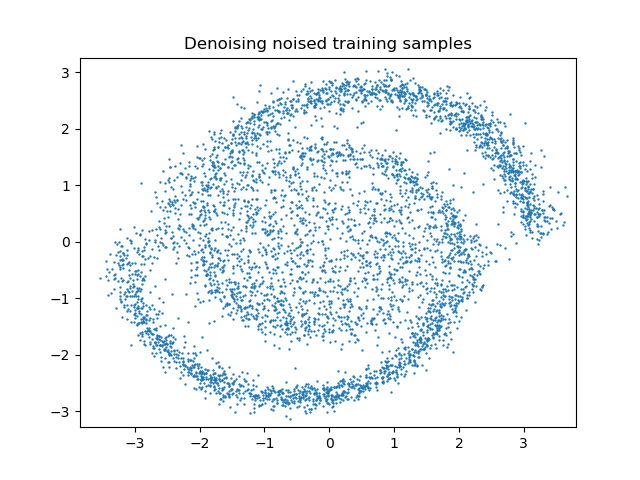}
    \includegraphics[width=0.120\linewidth,trim=58 38 40 40, clip]{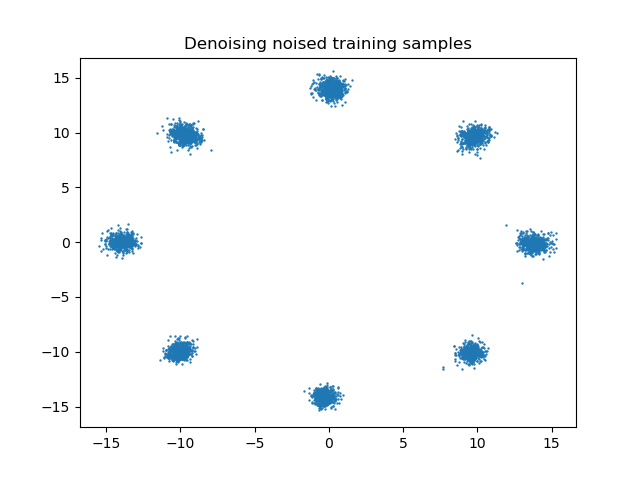}
    \includegraphics[width=0.120\linewidth,trim=58 38 40 40, clip]{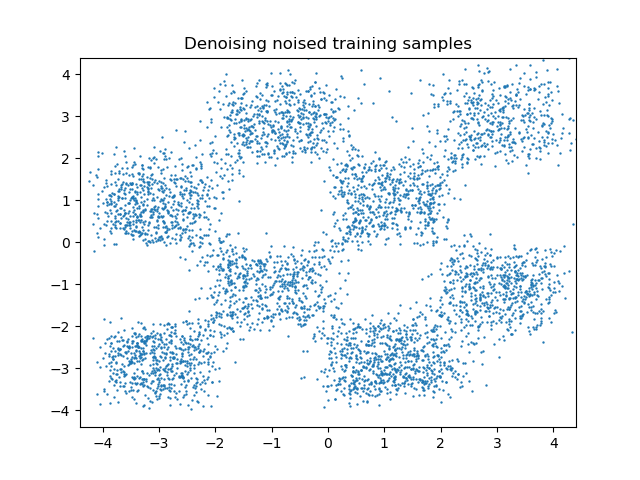}
    \includegraphics[width=0.120\linewidth,trim=58 38 40 40, clip]{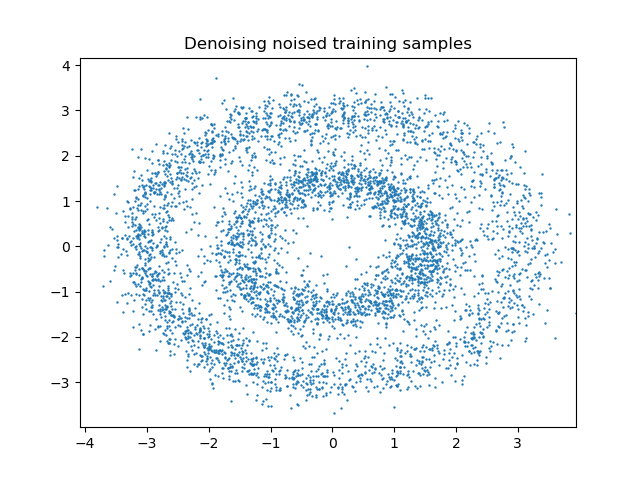}
    \includegraphics[width=0.120\linewidth,trim=58 38 40 40, clip]{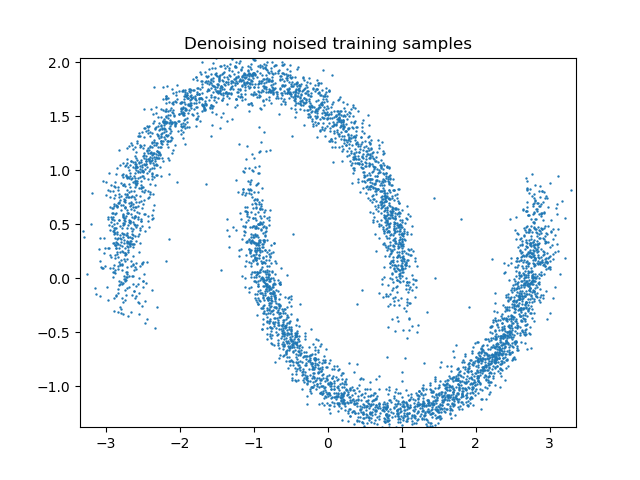}
    \includegraphics[width=0.120\linewidth,trim=58 38 40 40, clip]{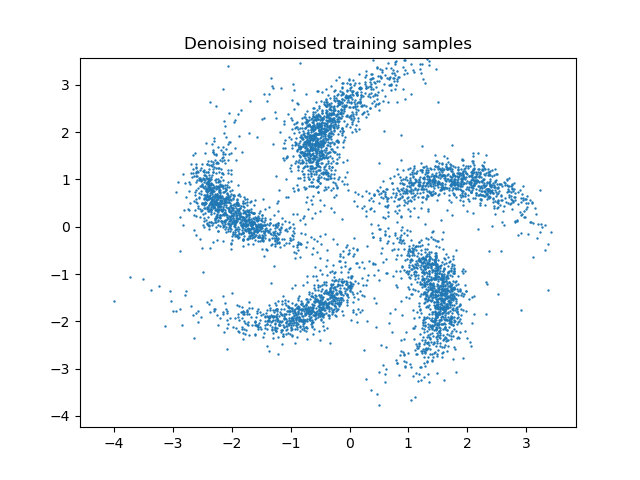}
    \includegraphics[width=0.120\linewidth,trim=58 38 40 40, clip]{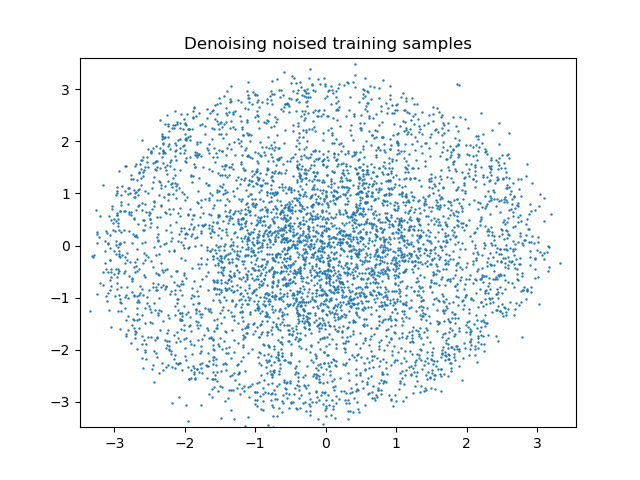}
    \includegraphics[width=0.120\linewidth,trim=58 38 40 40, clip]{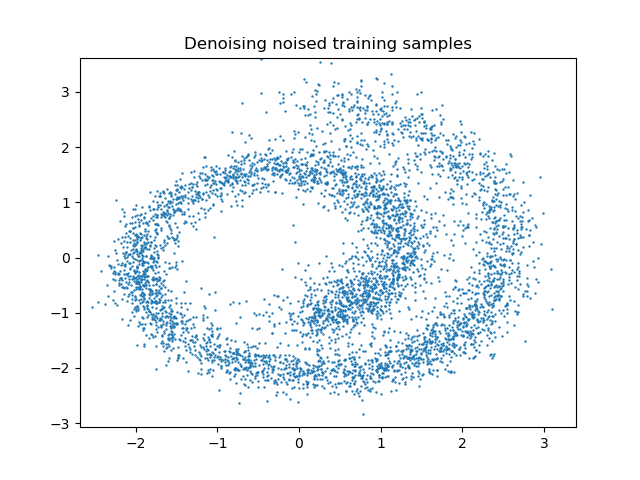}
    \caption{Denoising score matching, trained with $5\times 10^4$ epochs}\label{fig:dsm50000}
\end{subfigure}


\begin{subfigure}{\textwidth}
    \includegraphics[width=0.120\linewidth,trim=58 38 40 40, clip]{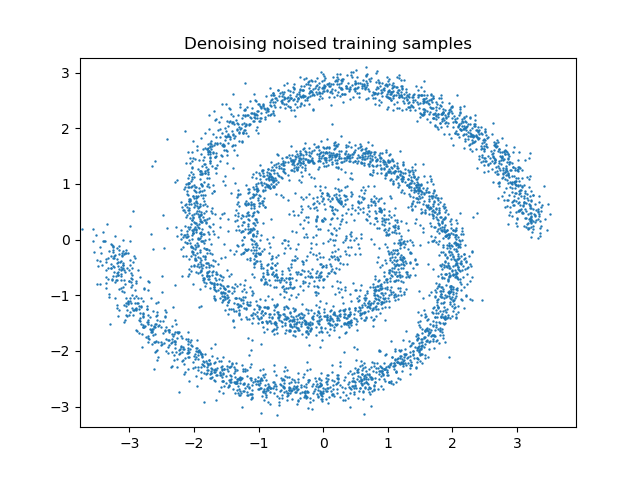}
    \includegraphics[width=0.120\linewidth,trim=58 38 40 40, clip]{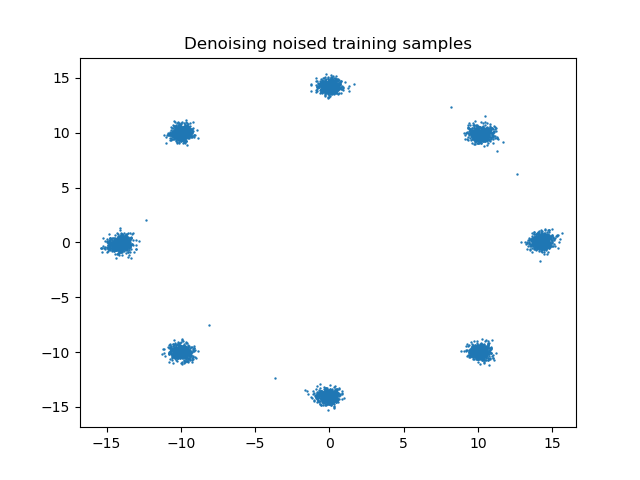}
    \includegraphics[width=0.120\linewidth,trim=58 38 40 40, clip]{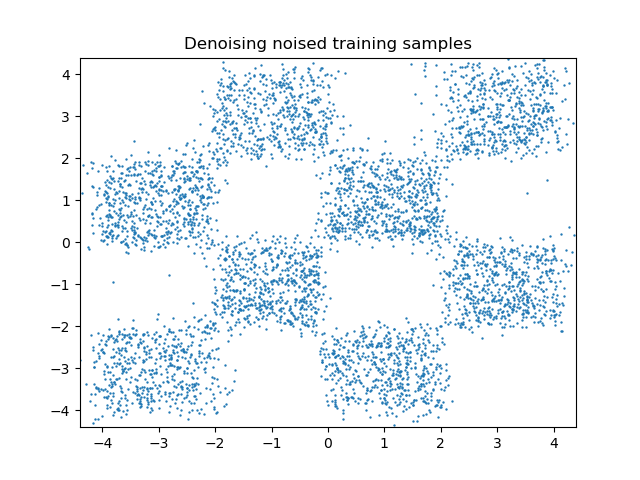}
    \includegraphics[width=0.120\linewidth,trim=58 38 40 40, clip]{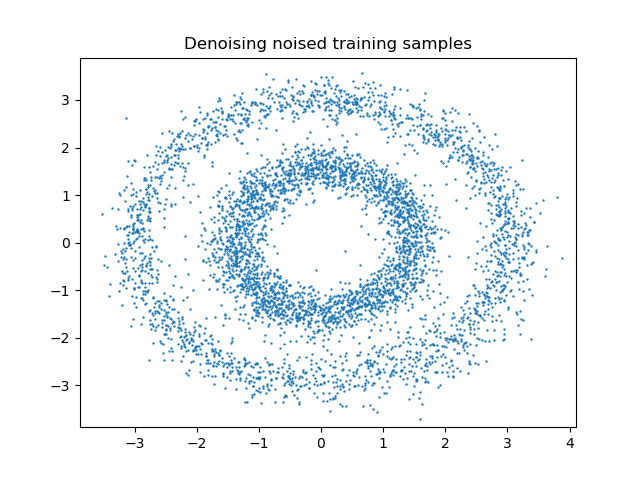}
    \includegraphics[width=0.120\linewidth,trim=58 38 40 40, clip]{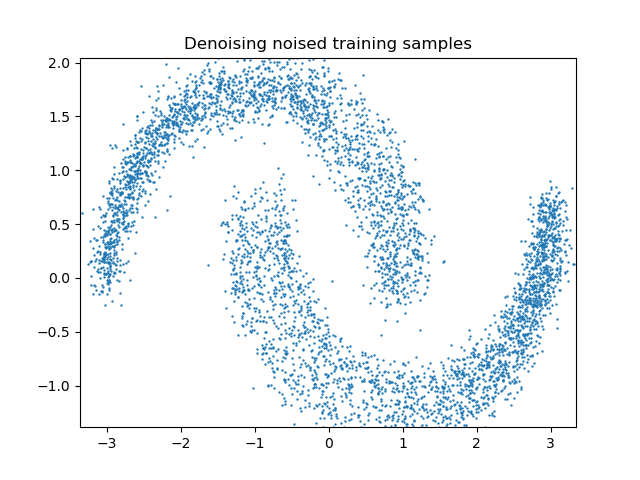}
    \includegraphics[width=0.120\linewidth,trim=58 38 40 40, clip]{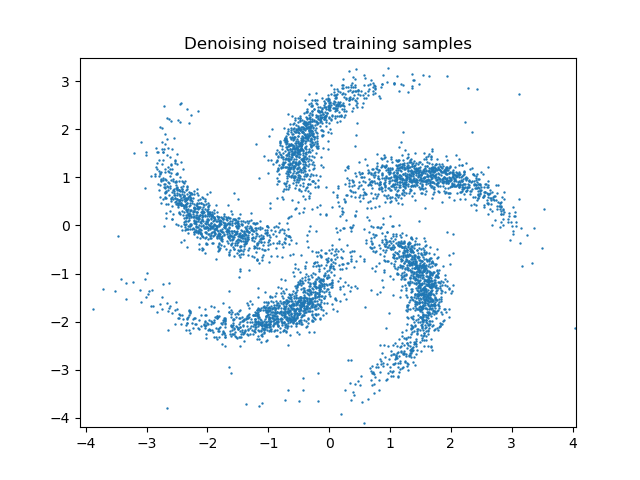}
    \includegraphics[width=0.120\linewidth,trim=58 38 40 40, clip]{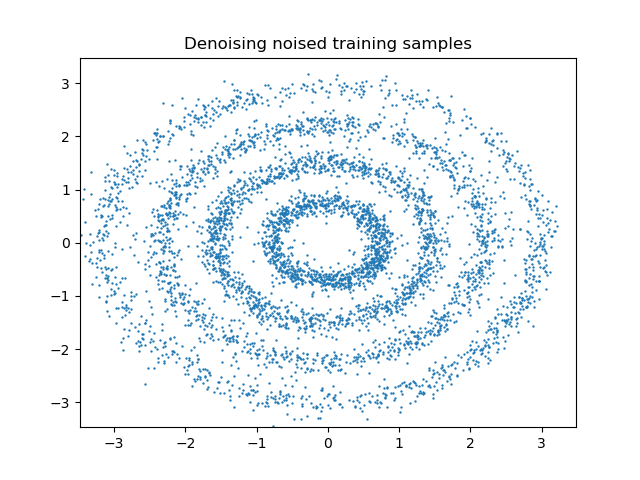}
    \includegraphics[width=0.120\linewidth,trim=58 38 40 40, clip]{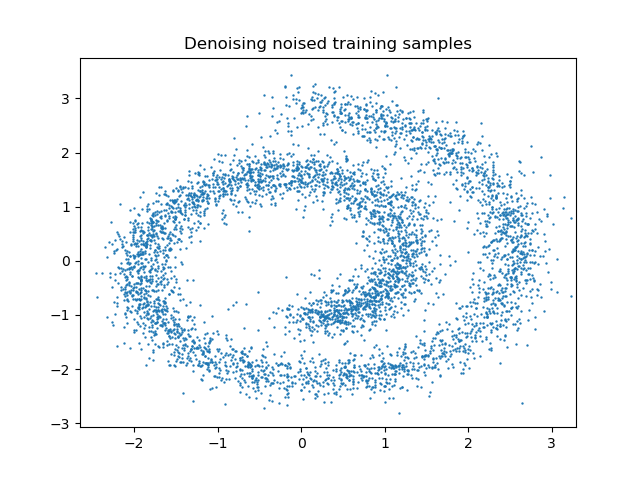}
    \caption{Denoising score matching, trained with $10^6$ epochs}\label{fig:dsm1000000}
\end{subfigure}

  \caption{2500 samples generated via different models when training dataset size is limited to $5 \times 10^4$. } 
  \label{fig:comparison_toydata} 
\end{figure}

Furthermore, in contrast to standard  score-based generative models, the WPO-informed kernel model inherently provides a density estimate. In Figure \ref{fig:wpo_densiies} we plot the kernel density approximation that corresponds with the generated samples Figure \ref{fig:6Dswissroll}. We emphasize that these density plots are \emph{not} reconstructed densities from the generated samples, rather they are directly provided by the WPO-informed kernel model in \eqref{eq:kernelcovdensity}.

\begin{figure}
  \begin{minipage}{0.25\textwidth}
    \centering
    \includegraphics[width=\linewidth,trim=0 0 0 0, clip]{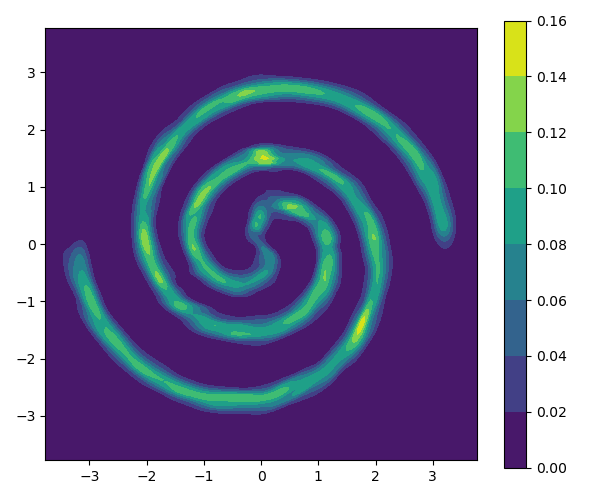}
  \end{minipage}%
  \begin{minipage}{0.25\textwidth}
    \centering
    \includegraphics[width=\linewidth,trim=0 0 0 0, clip]{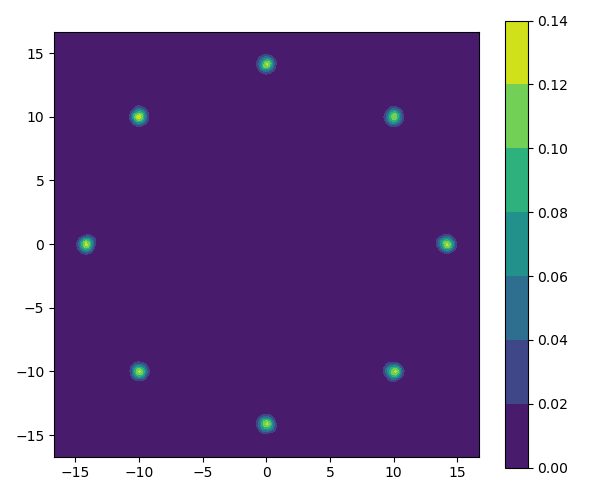}
  \end{minipage}%
  \begin{minipage}{0.25\textwidth}
    \centering
    \includegraphics[width=\linewidth,trim=0 0 0 0, clip]{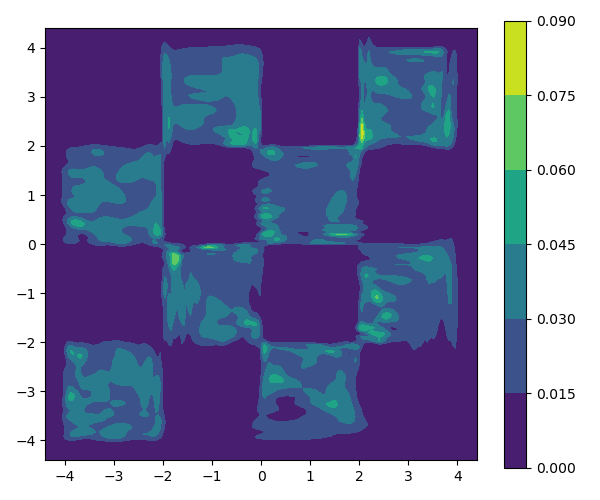}
  \end{minipage}%
  \begin{minipage}{0.25\textwidth}
    \centering
    \includegraphics[width=\linewidth,trim=0 0 0 0, clip]{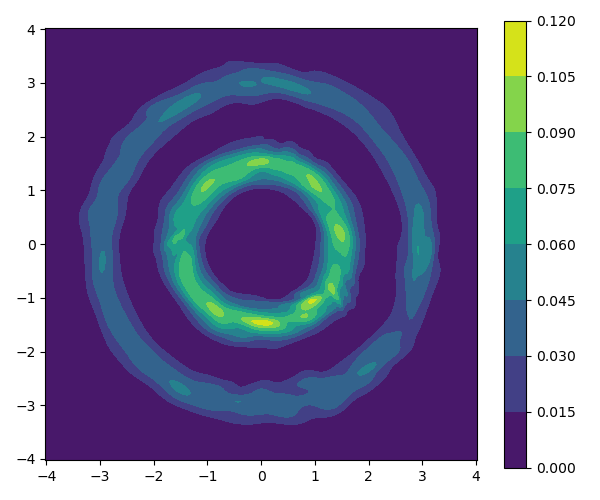}
  \end{minipage}
    \begin{minipage}{0.25\textwidth}
    \centering
    \includegraphics[width=\linewidth,trim=0 0 0 0, clip]{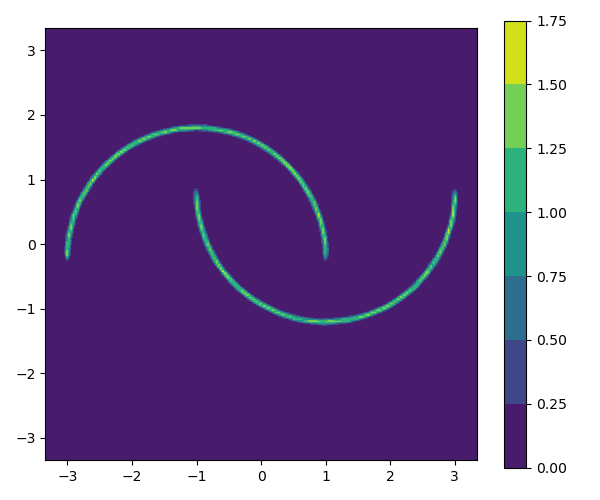}
  \end{minipage}%
  \begin{minipage}{0.25\textwidth}
    \centering
    \includegraphics[width=\linewidth,trim=0 0 0 0, clip]{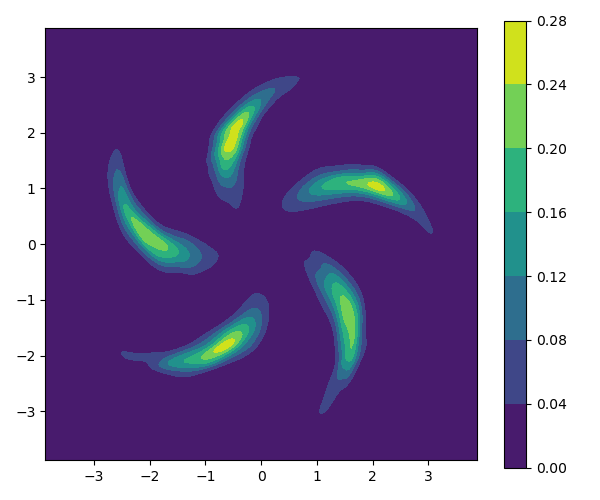}
  \end{minipage}%
  \begin{minipage}{0.25\textwidth}
    \centering
    \includegraphics[width=\linewidth,trim=0 0 0 0, clip]{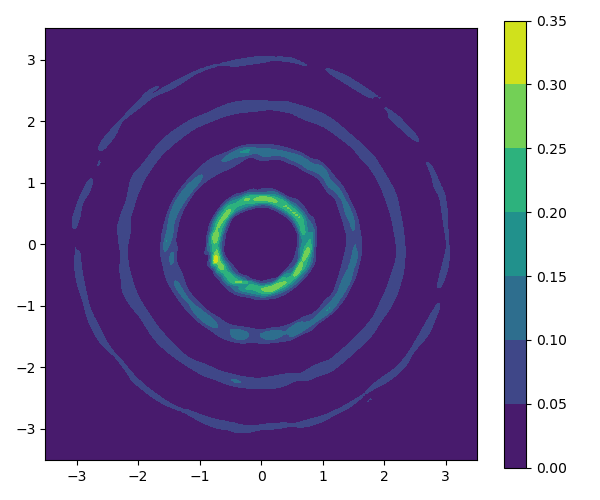}
  \end{minipage}%
  \begin{minipage}{0.25\textwidth}
    \centering
    \includegraphics[width=\linewidth,trim=0 0 0 0, clip]{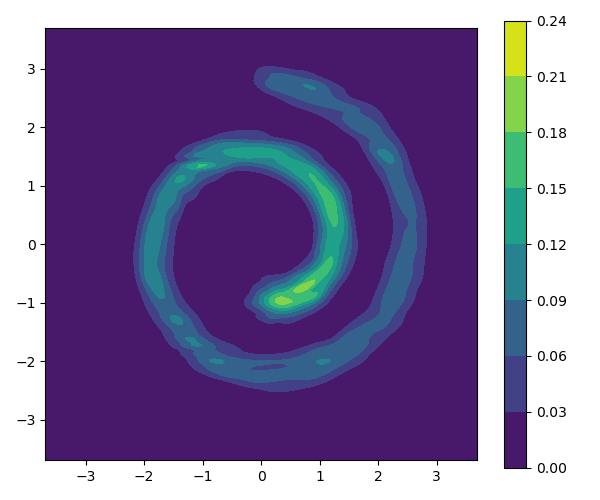}
  \end{minipage}%
  \caption{Density plots constructed by evaluating the kernel density estimated using WPO-informed kernel model as in experiments shows in Figure~\ref{fig:comparison_toydata}. These density plots are not reconstructed densities from samples. }
  \label{fig:wpo_densiies}
\end{figure}

To demonstrate that the WPO-informed kernel model is at least scalable to moderate dimensions even with the crude parametrization of the Cholesky factors of the precision matrices, we apply our method to the 3D swissroll dataset noisily embedded in a six-dimensional space. In Figure \ref{fig:6Dswissroll} we demonstrate the WPO-informed kernel model is able to produce a density estimate of the data distribution.

\begin{figure}

\begin{subfigure}{0.49\textwidth}
        \includegraphics[width = \textwidth]{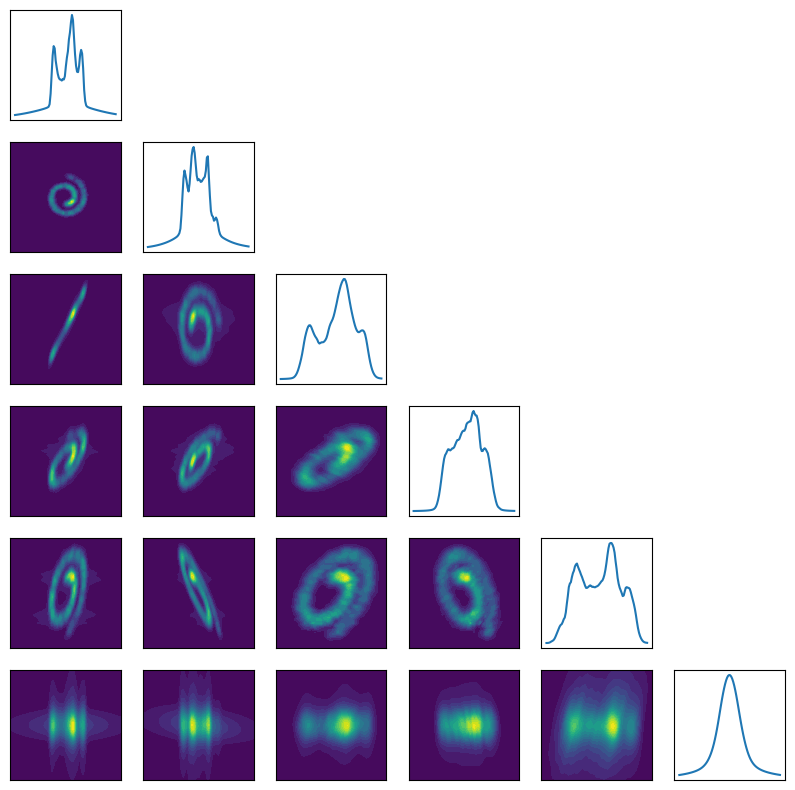}
        \subcaption{WPO-informed kernel model 2D marginal densities.  }
\end{subfigure}
\begin{subfigure}{0.49\textwidth}
        \includegraphics[width = \textwidth]{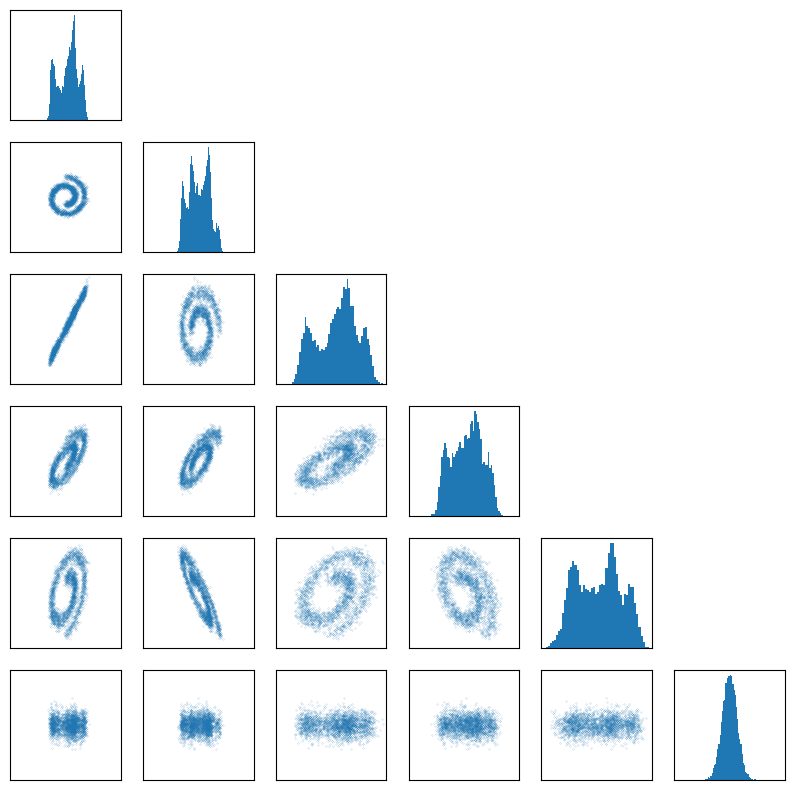}
        \subcaption{True samples.}
\end{subfigure}
        \caption{Six dimensional example: 3D swissroll noisily embedded in a 6D space. Proof of concept that the WPO-informed kernel model \eqref{eq:kernelcovdensity} is at least scalable to moderate dimensions. } We train the WPO-informed kernel model with 5000 kernel centers with $10^6$ training points. A Cholesky factor model is a feedforward neural net with 6 hidden layers and 64 nodes per layer. The model is trained via stochastic gradient descent with batch size 64 over $10^5$ iterations. The 2D marginal densities are not reconstructed from data. 
        \label{fig:6Dswissroll}
\end{figure}

\subsection{Learning the local covariance learns the data manifold}
To highlight the manifold learning properties of the WPO-informed kernel model, in Figure \ref{fig:local_cov_2moon} we plot the density estimate of the two thin moons example along with plots of evaluations of the learned local covariance (inverse precision) matrices. We plot the eigenvectors of each covariance matrix scaled by its corresponding eigenvalue. Notice that the orientation of the ellipses change with location and that the axes of the ellipse are not identical. The precision matrix that is learned is, in effect, learning the Riemannian metric of the data manifold \cite{roweis2000nonlinear}. Moreover, as the data distribution truly becomes lower dimensional, i.e., the two moons become simply two lines, then the length of the shorter axis of each ellipse will tend towards zero.

\begin{figure}
  \begin{minipage}{0.5\textwidth}
    \centering
    \includegraphics[width=\linewidth]{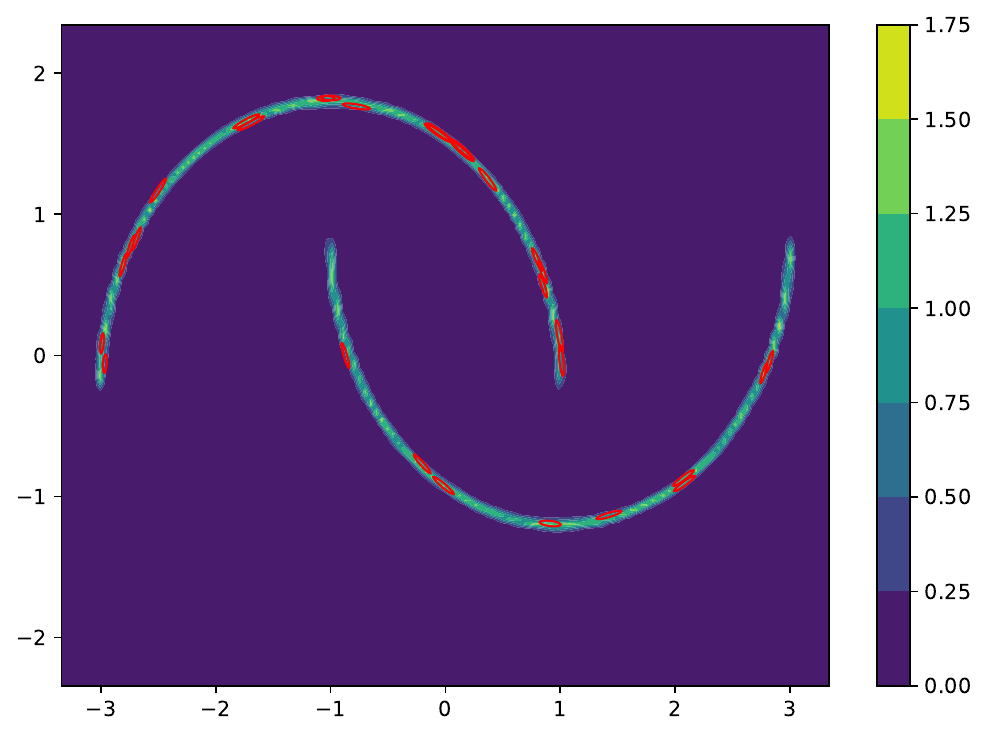}
  \end{minipage}%
    \begin{minipage}{0.5\textwidth}
    \centering
    \includegraphics[width=\linewidth]{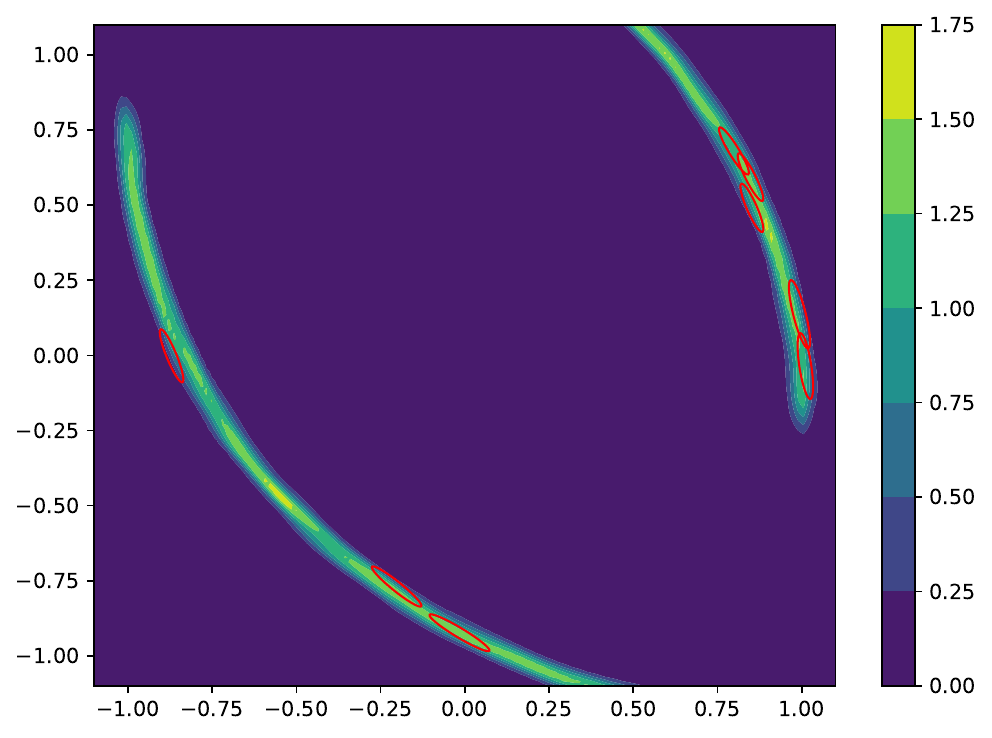}
  \end{minipage}%
    \caption{We use ellipses to represent the local covariance matrices obtained from the WPO-informed kernel model, which shows our trained model reveals the underlying data manifold. The model is trained with the two moons dataset, as the experiment depicted in Figure \ref{fig:wpo_densiies}. We draw $25$ samples $Z_i, i = 1, ..., 25$ from the trained kernel centers, accessing the learned covariance matrices by evaluating the trained Cholesky factorization $\bGamma_{\theta}(x)^{-1}$ at $x = Z_i$ for $i = 1, ..., 25$. Each ellipse is centered at $Z_i$, and its orientation and axes are determined by the eigenvectors and eigenvalues of the corresponding covariance matrix. On the right, we present a zoomed-in plot of the left figure.}
  \label{fig:local_cov_2moon}
\end{figure}

\subsection{Manifold learning generalizes better than early stopping {and prevents memorization}}

\label{sec:num:earlystop}

We further emphasize that the proposed WPO-informed kernel model generalizes better than the empirical kernel formula \eqref{eq:overfitsgm} due to the WPO-informed model's manifold learning capabilities. From \cite{li2024good,pidstrigach2022score}, it is known that a neural net trained with the denoising score matching objective \eqref{eq:dsm} will, as the model complexity and training dataset increases, converge to the kernel score formula in \eqref{eq:overfitsgm}, which is known to {produce a generative model that memorizes the training data and will not generalize.} However, in practical implementation, early stopping of the denoising diffusion process, i.e., simulating \eqref{eq:denoising} for $t\in [0,T-\epsilon]$ for some $0<\epsilon\ll 1$ is often performed so that the SGM generalizes. As discussed in Section \ref{sec:simplekernel} and in \cite{li2024good}, in the continuous time, early stopping has the effect of sampling from $(G_{\beta^2/2,\epsilon}* \hat{\pi})(x)$. 

In Figure \ref{fig:wpo_vs_const_kernel} we perform an illustrative experiment showing the superior generalization capabilities of the WPO-informed kernel model. Sampling from $(G_{\beta^2/2,\epsilon} * \hat{\pi})(x)$ is effectively sampling from the original training dataset added to an isotropic Gaussian noise. When $\epsilon$ is small, the generative distribution is quite rough and has patchy support. When $\epsilon$ is large, this has the effect of sampling from a very smoothed data distribution, which means the model lacks the expressiveness to approximate the true data distribution. In contrast, we plot the result of the WPO-informed kernel model and show that it is able to adaptively learn the local precision matrices and, therefore, better approximate the data distribution and generalize better.

\begin{figure}[h]

  \begin{minipage}{0.1\textwidth}
$\epsilon= 0.05$
  \end{minipage}%
  \begin{minipage}{0.22\textwidth}
    \centering
    \includegraphics[width=\linewidth,trim=0 0 0 0, clip]{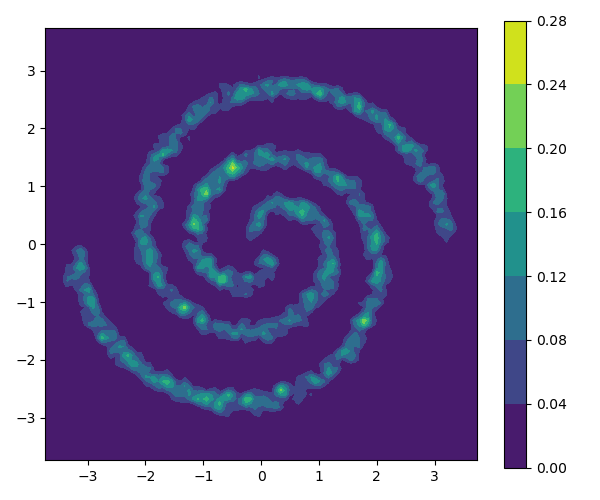}
  \end{minipage}%
  \begin{minipage}{0.22\textwidth}
    \centering
    \includegraphics[width=\linewidth,trim=0 0 0 0, clip]{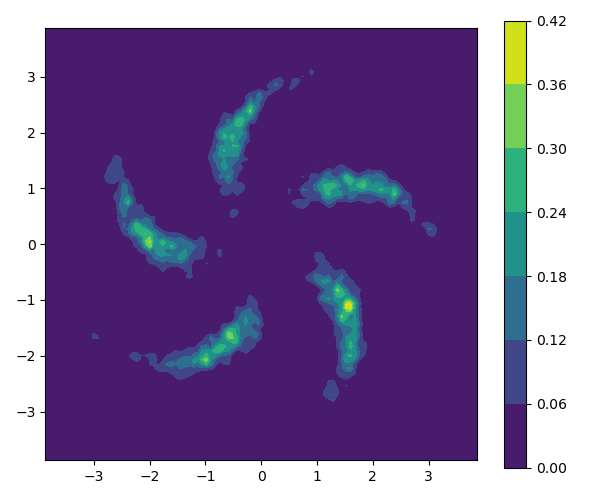}
  \end{minipage}%
  \begin{minipage}{0.22\textwidth}
    \centering
    \includegraphics[width=\linewidth,trim=0 0 0 0, clip]{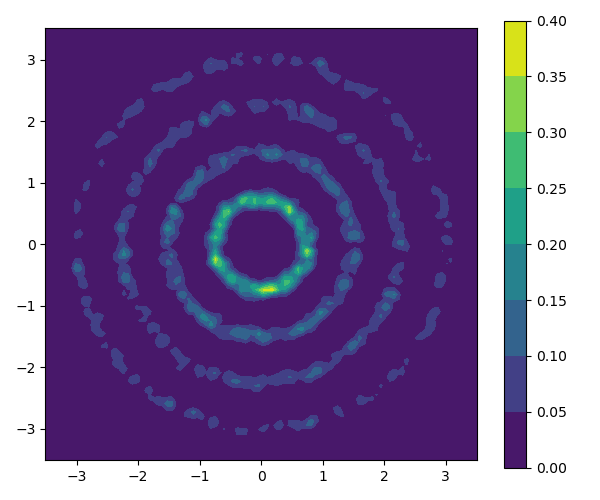}
  \end{minipage}%
  \begin{minipage}{0.22\textwidth}
    \centering
    \includegraphics[width=\linewidth,trim=0 0 0 0, clip]{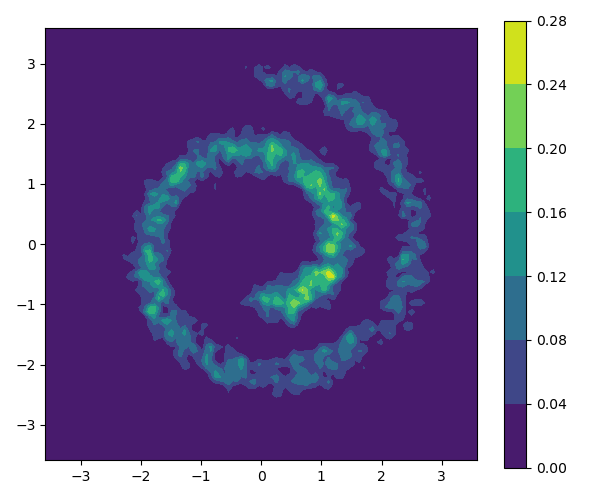}
  \end{minipage}
\begin{minipage}{0.1\textwidth}
$\epsilon= 0.1$
  \end{minipage}%
  \begin{minipage}{0.22\textwidth}
    \centering
    \includegraphics[width=\linewidth,trim=0 0 0 0, clip]{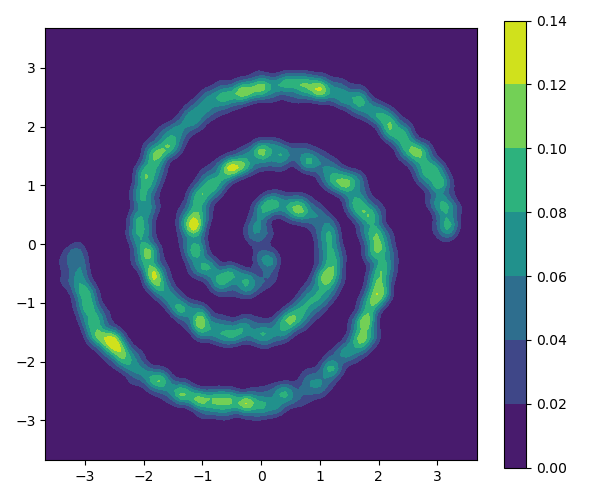}
  \end{minipage}%
  \begin{minipage}{0.22\textwidth}
    \centering
    \includegraphics[width=\linewidth,trim=0 0 0 0, clip]{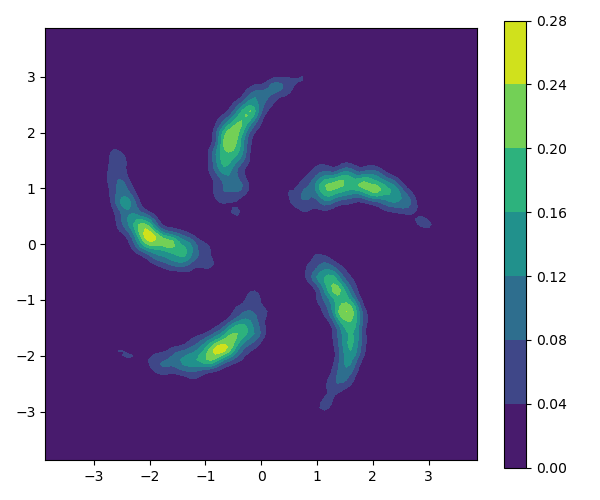}
  \end{minipage}%
  \begin{minipage}{0.22\textwidth}
    \centering
    \includegraphics[width=\linewidth,trim=0 0 0 0, clip]{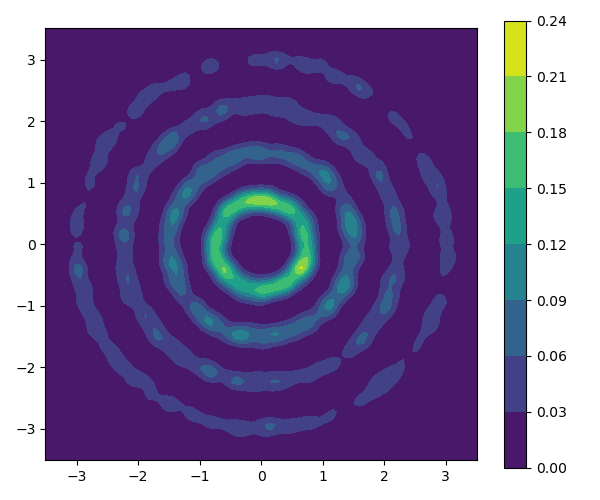}
  \end{minipage}%
  \begin{minipage}{0.22\textwidth}
    \centering
    \includegraphics[width=\linewidth,trim=0 0 0 0, clip]{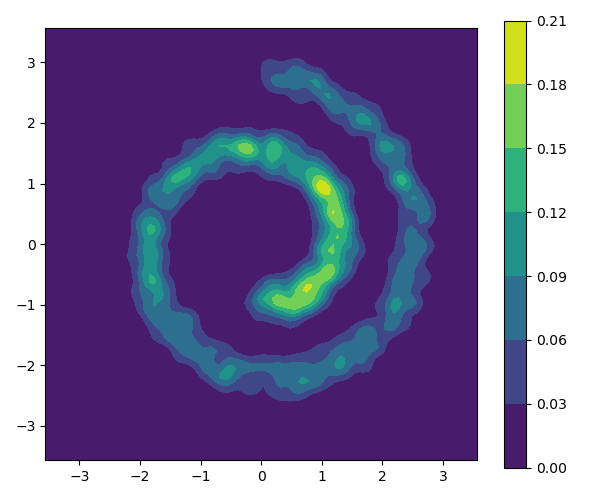}
  \end{minipage}

\begin{minipage}{0.1\textwidth}
$\epsilon= 0.2$
  \end{minipage}%
  \begin{minipage}{0.22\textwidth}
    \centering
    \includegraphics[width=\linewidth,trim=0 0 0 0, clip]{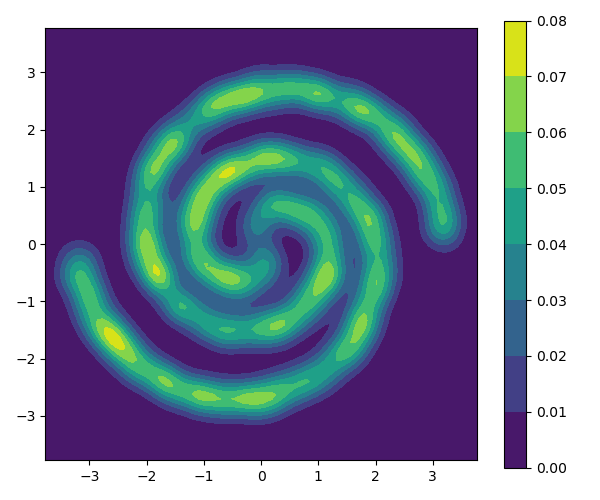}
  \end{minipage}%
  \begin{minipage}{0.22\textwidth}
    \centering
    \includegraphics[width=\linewidth,trim=0 0 0 0, clip]{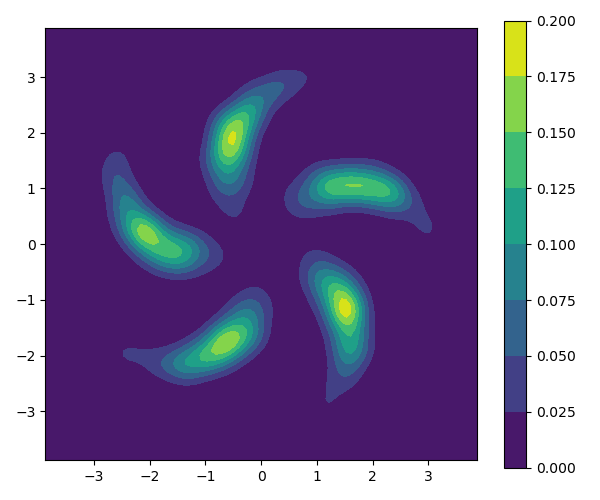}
  \end{minipage}%
  \begin{minipage}{0.22\textwidth}
    \centering
    \includegraphics[width=\linewidth,trim=0 0 0 0, clip]{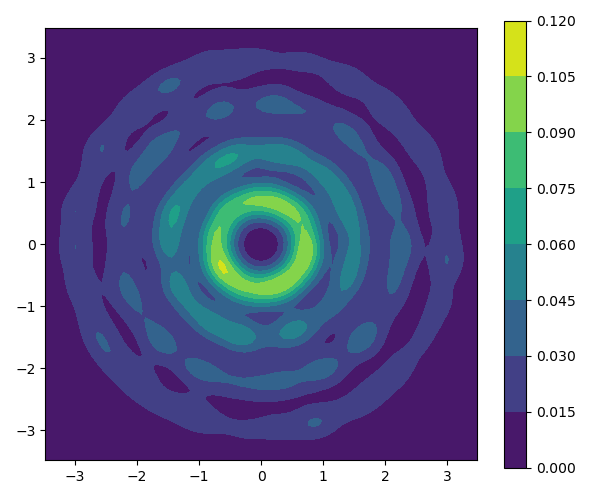}
  \end{minipage}%
  \begin{minipage}{0.22\textwidth}
    \centering
    \includegraphics[width=\linewidth,trim=0 0 0 0, clip]{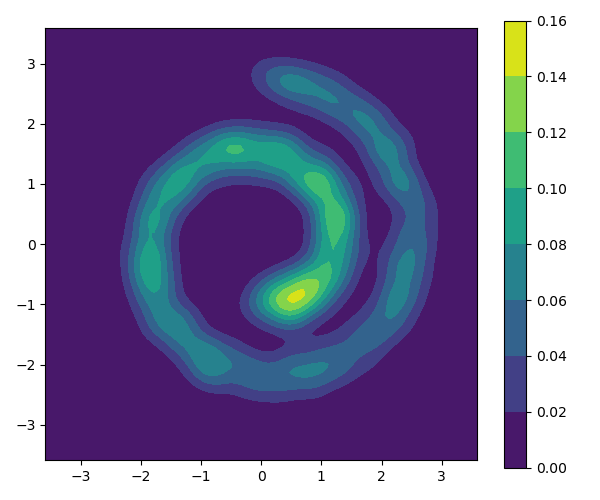}
  \end{minipage}

\begin{minipage}{0.1\textwidth}
WPO-informed kernel
  \end{minipage}%
  \begin{minipage}{0.22\textwidth}
    \centering
    \includegraphics[width=\linewidth,trim=0 0 0 0, clip]{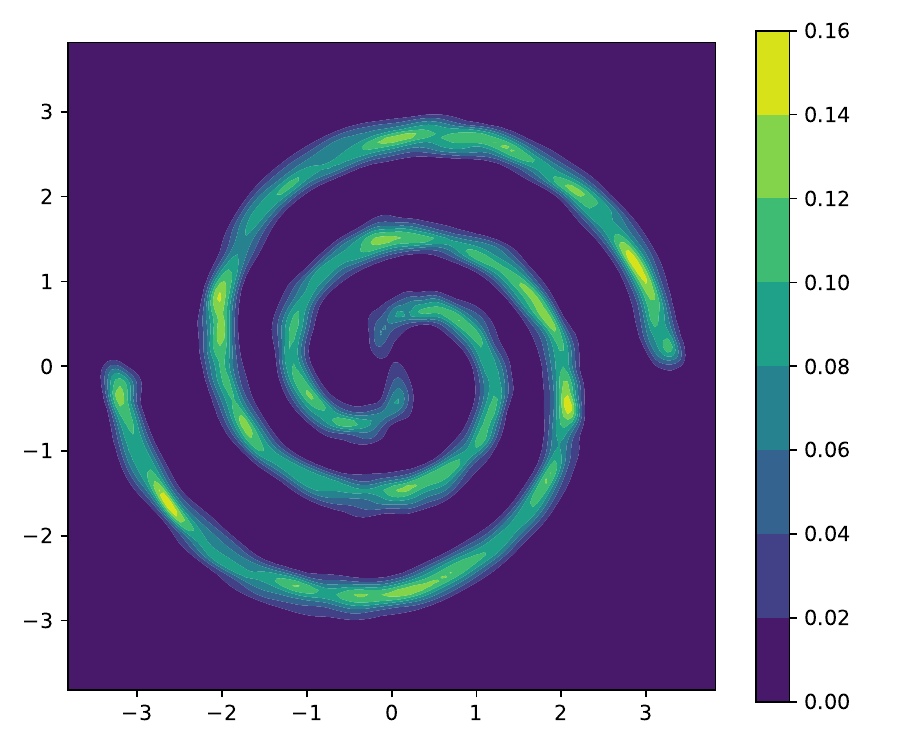}
  \end{minipage}%
  \begin{minipage}{0.22\textwidth}
    \centering
    \includegraphics[width=\linewidth,trim=0 0 0 0, clip]{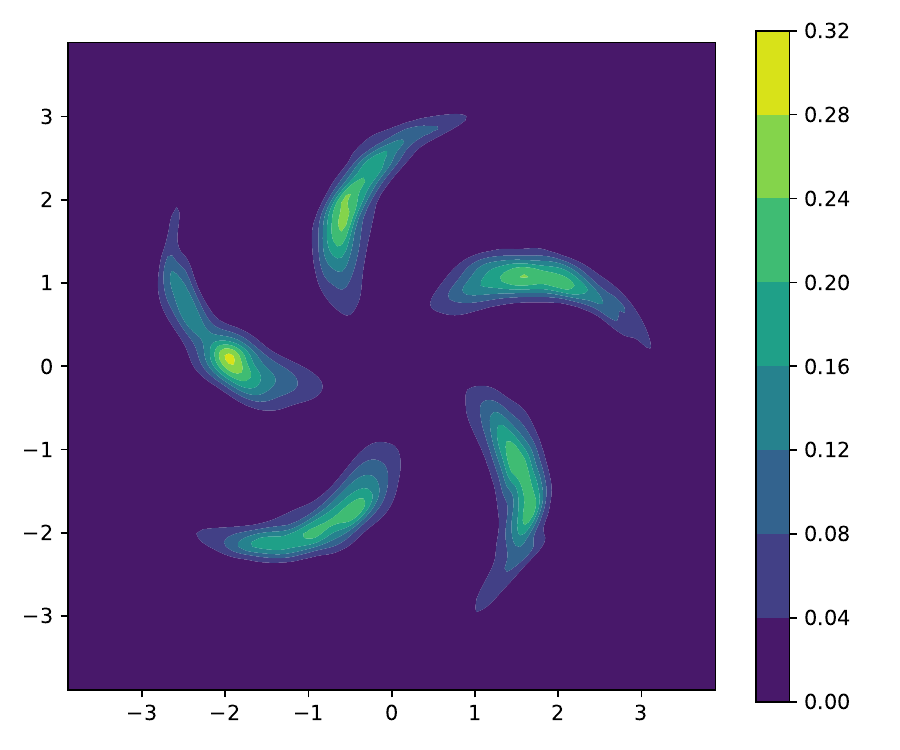}
  \end{minipage}%
  \begin{minipage}{0.22\textwidth}
    \centering
    \includegraphics[width=\linewidth,trim=0 0 0 0, clip]{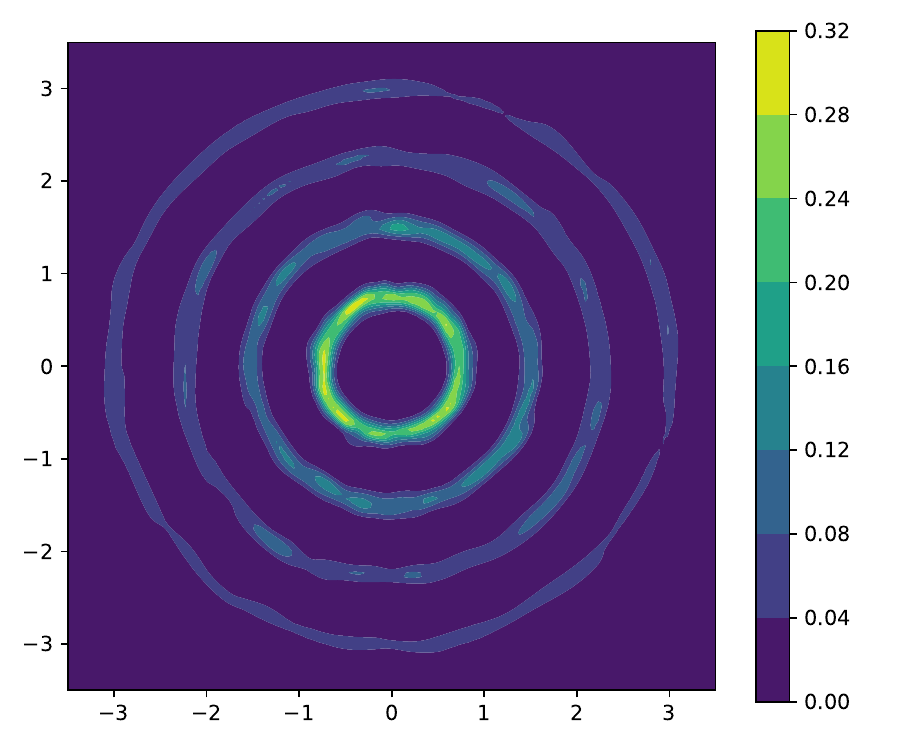}
  \end{minipage}%
  \begin{minipage}{0.22\textwidth}
    \centering
    \includegraphics[width=\linewidth,trim=0 0 0 0, clip]{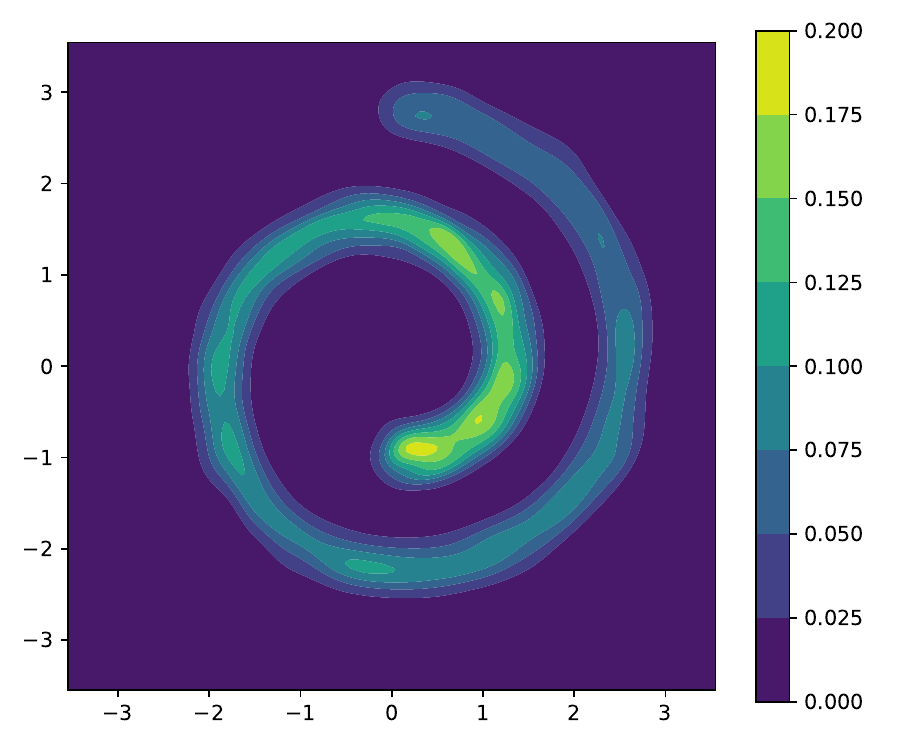}
  \end{minipage}
  
  \caption{
Comparison of density estimates between the empirical kernel model and WPO-informed kernels. All figures are estimated using a set of $2500$ samples for four datasets. In the first three rows, we employ an identity covariance with a constant scalar $\epsilon$. In the last row, we utilize the WPO-informed kernel model trained on $5000$ sample centers over $10^5$ epochs. Subsequently, plot the density using the trained model.
}
  \label{fig:wpo_vs_const_kernel}
\end{figure}

\section{Discussion: Resolving memorization and scalable implementations of the WPO-informed kernel model}

{
We presented a fundamental characterization of score-based generative models with SDEs in terms of the \textbf{Wasserstein proximal operator (WPO) of cross-entropy}. We showed that, through \textbf{mean-field games} and the \textbf{Cole-Hopf transformation}, the entropically regularized WPO precisely yields the canonical formulation of SGMs where the noising process is a Brownian motion. The optimality conditions of the mean-field game yield an \textbf{Hamilton-Jacobi-Bellman equation} that can be exactly expressed with a kernel representation formula \eqref{eq:repformula} (Proposition \ref{prop:repformula}).  Given finite training data, application of the kernel formula using the empirical distribution \eqref{eq:overfitsgm} for SGM will memorize the training data. We proposed a WPO-informed kernel model that is based on learning local precision matrices around some subset of the training data, which has the effect of \textbf{learning the manifold} of the underlying data distribution. The local precision matrices, and therefore the data manifold, are learned by fitting to the \textbf{terminal condition} of the HJB via implicit score-matching. We demonstrated through numerical examples that our approach results in a score model that is \emph{interpretable} and \textbf{learns faster with less data}. Here we summarize how the WPO-informed kernel model resolves memorization and how it may be scaled to higher dimensions via a deep learning connection.}

{\subsection{Resolving memorization in score-based generative models}
The memorization phenomenon of score-based generative models has been observed in simple toy models \cite{pidstrigach2022score,li2024good}, in small image datasets \cite{gu2023memorization}, large image datasets \cite{somepalli2023diffusion}, and in practical text-to-image applications \cite{somepalli2023understanding}. Moreover, \cite{pidstrigach2022score} studies the memorization phenomenon through SGM's manifold detecting behavior, showing that under suitable conditions, the optimal function that minimizes the DSM objective will result in a generative model that matches the support of the training data. In \cite{li2024good} the authors show that a neural net with sufficient complexity will, with enough training time, learn the empirical score kernel formula \eqref{eq:overfitsgm} very well. The learned score function yields a generative model whose generalization properties amount to early stopping of the denoising diffusion process, and that the produced samples come from a kernel density estimate. The score formula coming from the empirical distribution is typically cast in a negative light due its memorization effects. Our WPO-informed approach to SGM \emph{embraces} the kernel formula as kernels encode the core mathematical structure of score-based generative models, and we show that the kernel formula can be modified to avoid memorization. }

We summarize our contribution in explaining and resolving memorization effects in score-based generative modeling. The key reason for why SGMs memorize the training data is demonstrated by the optimal score function for the denoising score matching optimization problem \eqref{eq:dsm} which, assuming a finite sample, can be written in closed form using a kernel formula \eqref{eq:overfitsgm} based on the empirical distribution on the training data \eqref{eq:empiricaldistribution}. When the denoising diffusion process \eqref{eq:denoising} is simulated with the empirical score kernel formula \eqref{eq:overfitsgm}, the system will sample only from the the empirical distribution, thereby exhibiting memorization of the training data. 

{ Our WPO-informed kernel model for the score function relies on the connection between score-based generative models and the Wasserstein proximal operator \eqref{eq:wpo} via a mean-field game \eqref{eq:score_opt}. The form of the resulting MFG partial differential equations informs the Cole-Hopf and kernel structures of its solutions, yielding Proposition \ref{prop:repformula}, which provides a general kernel representation formula \eqref{eq:repformula} the score function. One approximation for \eqref{eq:repformula} is via the empirical distribution of the training samples \eqref{eq:overfitsgm}, but is not the only choice. The WPO-informed kernel formula \eqref{eq:kernelcovdensity} provides a \emph{smooth} approximation to the data distribution where local precision matrices around certain kernel centers are learned. The smoothness of the approximation is key to avoid memorization and improving generalization. The precision matrices are learned by only performing implicit score matching \emph{at the terminal time.} This is not possible with the denoising score matching objective as it will simply fit to the training data. The use of the implicit score matching objective is only possible with our model because it is \emph{smooth}, and by the fact in Proposition \ref{prop:hjbsolution}, shows that it is \textit{sufficient} to only learn the score at the terminal time as the kernel formula provides a closed-form solution for the score at all future times. In fact, we welcome a description of the WPO-informed kernel model as simply \emph{a Gaussian mixture model that is trained using the implicit score matching objective. } } 

\subsection{Scaling the WPO-informed kernel model for high dimensional applications: a bespoke neural network architecture} 

\label{sec:bespoke}

While the kernel-based score model yields an interpretable model that is constructed to respect its fundamental mathematical structure, they are generally not scalable due to the computational cost of evaluating the kernels. Here, we study the kernel-based model \eqref{eq:kernelcovdensity} and show that it admits a neural network interpretation, thereby providing means for scaling the kernel-based model to high-dimensional applications. In other words, we provide preliminary explorations of what are suitable neural network approximations to the kernel representation formula in Proposition \ref{prop:repformula}. {The resulting neural network architecture not only inherits the kernel structure from the WPO-informed model but, more fundamentally, the PDE structure informed by the Cole-Hopf formula. The resulting bespoke neural network then becomes yet another way to approximate the kernel representation formula in Proposition \ref{prop:repformula}. }

For $x\in \R^d$, let $(x\otimes x)$ be the column-wise Kronecker product
\begin{align}
    (x\otimes x) = \begin{bmatrix}
        x_1^2 & x_1x_2 & \cdots & x_1x_d & x_2^2 & \cdots & x_2x_d & \cdots & \cdots & x_d^2 
    \end{bmatrix}^\top \in \R^{d^2}.
\end{align}
 Define a lifting operator $\mathcal{T}: \R^d \to \R^{d^2+d}$, $\mathcal{T}(x) = \tilde{x} = \left[(x\otimes x)^\top,\, x^\top \right]^\top$. Recall the kernel score model \eqref{eq:scoremodel} and notice that it may be written in terms of the softmax function, which implies a neural network connection. Indeed, observe that 
 we may rewrite 
 the exponent in \eqref{eq:scoremodel} as a linear function in the lifted space, namely 
 \[ 
 \mathbf{y}_\theta(x) = \mathbf{A}_\theta\mathcal{T}(x) + \mathbf{b}_\theta\, ,
 \]
  where 
 $\mathbf{A}_\theta = \begin{bmatrix}
    \mathbf{G}_\theta & \mathbf{H}_\theta
\end{bmatrix}^\top \in \R^{N\times(d^2+d)}$, with matrices $\mathbf{G}_\theta \in \R^{N\times d^2}$, $\mathbf{H}_\theta \in \R^{N\times d}$, $\mathbf{b}_\theta \in \R^N$, and
\begin{align}
    &\mathbf{G}_\theta = \begin{bmatrix}
        \text{vec}(\bGamma_\theta(Z_1)) & \text{vec}(\bGamma_\theta(Z_2)) & \cdots & \text{vec}(\bGamma_\theta(Z_N)) 
    \end{bmatrix}^\top \\
       & \mathbf{H}_\theta = \begin{bmatrix}
        \bGamma_\theta(Z_1) Z_1 & \bGamma_\theta(Z_2) Z_2 & \cdots & \bGamma_\theta(Z_N) Z_N
    \end{bmatrix}^\top  \nonumber \\
    & (\mathbf{b}_\theta)_i = \log \det \bGamma_\theta(Z_i) - \frac{d}{2} \log 2\pi - \log N \nonumber.
\end{align}
Here, $\text{vec}:\R^{N\times d} \to \R^{Nd}$ is the vectorization operator that concatenates columns of a matrix into one vector. The kernel score formula \eqref{eq:scoremodel} is then written as 
\begin{align}\label{eq:bespoke_nn_score}
    \mathsf{s}_\theta(x) &= ( \nabla \mathbf{y}_\theta(x) )\sigma\left( \mathbf{A}_\theta \mathcal{T}(x) + \mathbf{b}_\theta\right), \\
    \nabla \mathbf{y}_\theta(x) &= \mathbf{F}_\theta x-\mathbf{H}_\theta^\top, \nonumber\\ 
    \mathbf{F}_\theta(x) &= \begin{bmatrix} \bGamma_\theta(Z_1) x & \bGamma_\theta(Z_2) x & \cdots & \bGamma_\theta(Z_N) x \end{bmatrix}. \nonumber
\end{align}

We see that the score kernel model can be interpreted in terms of a shallow neural network where the inputs are elements of the lifted space $\R^{(d^2+d)N}$. This is a specialized, \emph{bespoke} neural network architecture derived from the WPO-informed kernel formula \eqref{eq:kernelcovdensity} that we contend may learn score models in a scalable fashion in high-dimensional applications. For example, typical deep learning implementations of implicit score-matching requires the use of autodifferentiation packages to compute or approximate the divergence of the score model \cite{song2020sliced}. The specialized kernel formula will admit closed form expressions for the divergence via its connection to the kernel formulas noted in Remark \ref{remark:noautodiff}.  Moreover, the score functions in the family of neural networks with the specialized architecture \eqref{eq:bespoke_nn_score} at any time $t \in [0,T)$ can be derived via Proposition \ref{prop:hjbsolution}. 

We emphasize that there is a direct correspondence between learning the parameters of the neural network, i.e., $\mathbf{A}_\theta,\mathbf{b}_\theta$ to learning the local precision matrices, $\bGamma(Z_i)$, in the kernel density formula. In particular, notice that entries of $\mathbf{A}_\theta$ require knowledge of the local precision matrices around kernel centers as well as the centers $Z_i$ themselves, while learning $\mathbf{b}_\theta$ corresponds with a normalizing factor depending on the local precision matrices. The neural network interpretation suggests that the kernel centers can be \emph{learned} rather than placed \emph{a priori} as we did in Sections \ref{sec:wpokernelmodel} and \ref{sec:numerical}. These properties give further intuitive justification for how SGMs learn manifolds of the underlying data distribution --- when a generic neural net learns the score function, it is learning (1) the implicit mathematical structure that is inherent in the score function, i.e., the lifting transform $\mathcal{T}$, (2) the kernel centers, and (3) the local precision matrix around each kernel center. Furthermore, in contrast with typical neural network training procedures, we know the explicit relationships within parameters $\mathbf{A}_\theta$ and $\mathbf{b}_\theta$, meaning that their relation may be imposed within the training process and may accelerate their learning. In particular, the learning of the lifting operator $\mathcal{T}$ is, in effect, learning an explicit mathematical relationship that is invariant with respect to the training data. Learning this explicit relationship would introduce additional statistical errors than if the relation were included in the model \emph{a priori} to training.

\bibliographystyle{ieeetr}
\bibliography{bibliotheque}

\end{document}